\theoremstyle{plain}
\newtheorem{theorem}{Theorem}[section]
\newtheorem{lemma}[theorem]{Lemma}
\theoremstyle{definition}
\newtheorem{definition}[theorem]{Definition}
\theoremstyle{remark}
\newcommand{\cmark}{\ding{51}}%
\newcommand{\xmark}{\ding{55}}%
\def\A{\mathbf{A}}
\def\Q{\mathbf{Q}}
\def\K{\mathbf{K}}
\def\V{\mathbf{V}}
\def\O{\mathbf{O}}
\def\M{\mathbf{M}}
\newcommand{\domR}{\mathbb{R}}
\newcommand{\domB}{\mathbb{B}}
\newcommand{\ie}{\textit{i}.\textit{e}.}
\newcommand{\eg}{\textit{e}.\textit{g}.} 
\newcommand{\wrt}{\textit{w}.\textit{r}.\textit{t}} 
\newcommand{\rKV}{{\textit{r}.KV}}
\newcommand{\pn}{{\textit{\textbf{n}}}}
\newcommand{\pa}{{\textit{\textbf{a}}}}
\definecolor{Gray}{gray}{0.94}
\def\model{SepLLM}
\newlength\savewidth\newcommand\shline{\noalign{\global\savewidth\arrayrulewidth
  \global\arrayrulewidth 1pt}\hline\noalign{\global\arrayrulewidth\savewidth}}
\icmltitlerunning{Accelerate LLMs by Compressing One Segment into One Separator} %
\begin{document}

\twocolumn[
\icmltitle{SepLLM: Accelerate Large Language Models\\by Compressing One Segment into One Separator}  %

\vspace{-1.0em}
\begin{icmlauthorlist}
\icmlauthor{Guoxuan Chen}{yyy,comp}
\icmlauthor{Han Shi}{yyy}
\icmlauthor{Jiawei Li}{yyy}
\icmlauthor{Yihang Gao}{comp}
\icmlauthor{Xiaozhe Ren}{yyy}
\icmlauthor{Yimeng Chen}{sch}
\icmlauthor{Xin Jiang}{yyy}\\
\icmlauthor{Zhenguo Li}{yyy}
\icmlauthor{Weiyang Liu}{sch2}
\icmlauthor{Chao Huang}{comp}
\end{icmlauthorlist}

\vspace{.5em}
\begin{center}
{Project page:}~\tt\href{https://sepllm.github.io/}{\textbf{sepllm.github.io}}
\vspace{-5mm}
\end{center}

\icmlaffiliation{yyy}{Huawei Noah's Ark Lab}
\icmlaffiliation{comp}{The University of Hong Kong}
\icmlaffiliation{sch}{Center of Excellence for Generative AI, KAUST}
\icmlaffiliation{sch2}{Max Planck Institute for Intelligent Systems, T\"ubingen}

\icmlcorrespondingauthor{Han Shi}{shi.han@huawei.com}

\icmlkeywords{Large Language Models, Language Modeling, Attention Mechanisms, LLM Architecture, Machine Learning, ICML}

\vskip 0.3in
]

\printAffiliationsAndNotice{}  %

\begin{abstract}
Large Language Models (LLMs) have exhibited exceptional performance across a spectrum of natural language processing tasks. However, their substantial sizes pose considerable challenges, particularly in computational demands and inference speed, due to their quadratic complexity.
In this work, we have identified a key pattern: certain seemingly meaningless separator tokens (\ie, punctuations) contribute disproportionately to attention scores compared to semantically meaningful tokens. This observation suggests that information of the segments between these separator tokens can be effectively condensed into the separator tokens themselves without significant information loss. Guided by this insight, we introduce SepLLM, a plug-and-play framework that accelerates inference by compressing these segments and eliminating redundant tokens. Additionally, we implement efficient kernels for training acceleration.
Experimental results across training-free, training-from-scratch, and post-training settings demonstrate SepLLM's effectiveness. Notably, using the Llama-3-8B backbone, SepLLM achieves over 50\% reduction in KV cache on the GSM8K-CoT benchmark while maintaining comparable performance. Furthermore, in streaming settings, SepLLM effectively processes sequences of up to 4 million tokens or more while maintaining consistent language modeling capabilities.
\end{abstract}

\vspace{-5mm}
\section{Introduction}
\label{sec:intro}
\vspace{-.5mm}

\begin{figure}[t]
\centering
\includegraphics[width=0.49\linewidth]{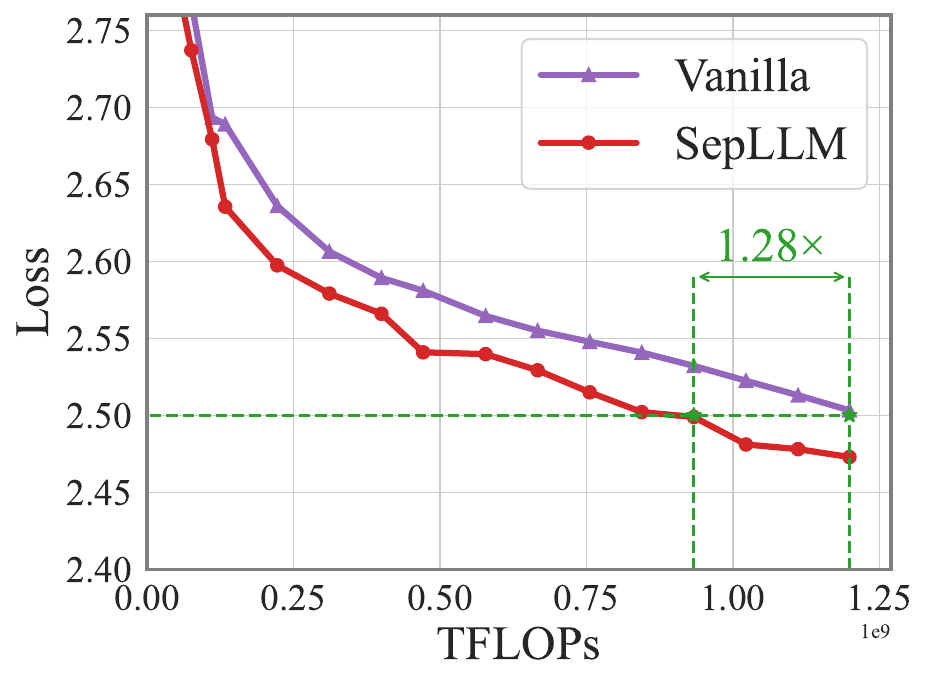}
\includegraphics[width=0.495\linewidth]{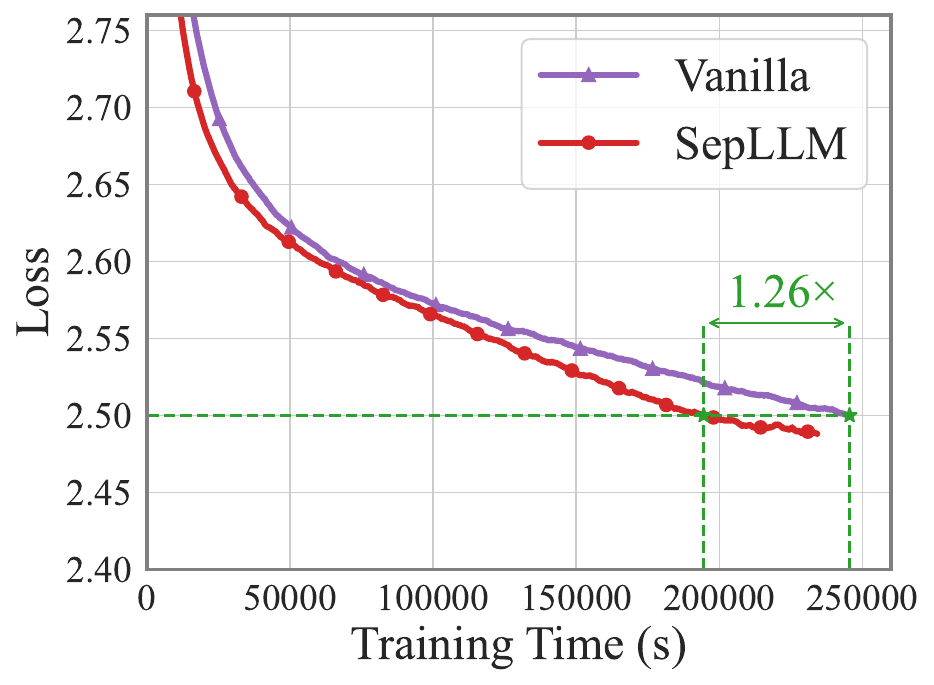}
\vspace{-8mm}
\caption{The loss comparison between vanilla Transformer and the proposed SepLLM. SepLLM achieves lower loss \wrt~different computation costs and different training time consistently.}
\label{fig:TFLOPs}
\vspace{-3mm}
\end{figure}

Transformer-based models \citep{NIPS2017_transformer} have exhibited exceptional performance across a wide range of tasks, including natural language processing \citep{zhang2020pegasus,raffel2020exploring}, computer vision \citep{dosovitskiy2020image}, and scientific machine learning \citep{geneva2022transformers}. 
However, vanilla Transformers that rely on next-token prediction face significant computational challenges, particularly when scaling to larger models and longer contexts. These computational inefficiencies significantly impact both inference speed and training time.

The core challenge underlying these efficiency issues is the self-attention module, which exhibits quadratic complexity with respect to the number of input tokens. Research on efficient Transformers in LLMs primarily follows two major directions.
The first approach focuses on linear attention~\citep{katharopoulos2020transformers,schlag2021linear}, replacing the vanilla self-attention module with alternatives that achieve linear complexity. %
However, these modifications make the architecture significantly different from traditional self-attention, preventing direct utilization of powerful pre-trained Transformer models. The second approach emphasizes KV cache optimization \cite{xiao2024infllm,zhu2024sampleattention,xiao2024efficient,li2024snapkv}, aiming to eliminate redundant KV cache to accommodate longer input contexts.
For example, \citet{xiao2024infllm} introduced an adaptive mechanism that selectively retains essential tokens and their KV based on cumulative attention scores. 
Similarly, \citet{zhu2024sampleattention} proposed a token selection strategy with controlled sparsity, achieving near-lossless acceleration. 
While promising, these training-free methods adapt poorly to the training stage, resulting in discrepancies between training and inference performance.
StreamingLLM~\citep{xiao2024efficient} represents a notable attempt to address these limitations by preserving attention sinks and local tokens to reduce computation and memory overhead. %
However, it omits many intermediate tokens, resulting in performance degradation compared to standard Transformers.

\begin{figure}[t]
    \centering
    \includegraphics[width=0.333\linewidth]{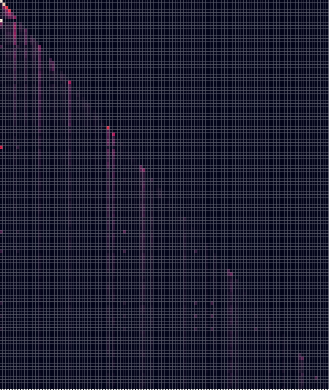}
    \includegraphics[width=0.32\linewidth]{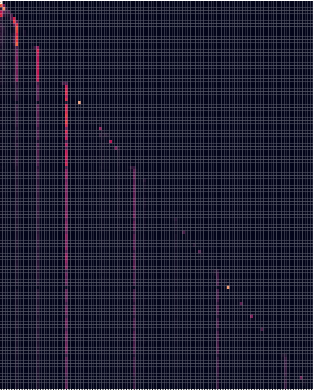}
    \includegraphics[width=0.318\linewidth]{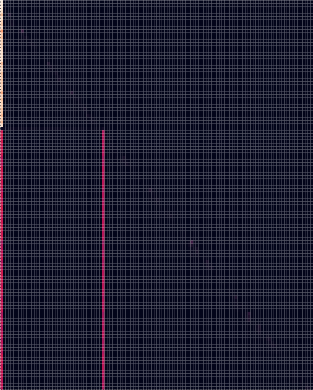}
    \vspace{-5mm}
    \caption{The visualization for attention scores of different layers given the input ``Natalia sold clips to 48 of her friends in April, and then she sold half as many clips in May. ...''. Note that the separator tokens like ``,'' and ``.'' contribute massive attentions.}    
    \vspace{-3mm}
    \label{fig:special_tokens}
\end{figure}

To gain better understanding of the intrinsic mechanism of LLMs, we analyze attention patterns across different samples. Figure~\ref{fig:special_tokens} illustrates the attention distribution when Llama-3-8B-instruct~\cite{dubey2024llama} processes a math problem (complete results are presented in Appendix~\ref{app:vis_attn}).     
Surprisingly, rather than focusing on semantically meaningful tokens (such as nouns and verbs), LLMs tend to prioritize attention to seemingly ``meaningless" separator tokens (like ``.'' or ``$\backslash n$'') for information retrieval.
This observation suggests that segment information is compressed and embedded into these separator tokens, enabling efficient information retrieval without direct extraction from content tokens.

Inspired by this observation, we introduce SepLLM, a new language modeling perspective as well as an efficient transformer architecture featuring a data-dependent sparse attention mechanism that selectively retains only initial, neighboring, and separator tokens while dropping other tokens. The training-free SepLLM performs comparably to vanilla Transformer, which validates our hypothesis that segment information is effectively compressed into separator tokens.
More importantly, we integrate SepLLM into the training stage (including both training from scratch or finetuning) and implement a hardware-efficient kernel based on FlexAttention~\cite{flexattn}. This integration reduces the discrepancies between training and inference that are present in previous approaches. 
As demonstrated in Figure~\ref{fig:TFLOPs}, SepLLM consistently achieves lower loss compared to vanilla Transformer given the same computational costs or training time.
Moreover, SepLLM reduces computational costs by 28\% and training time by 26\% while achieving the same training loss. Our contributions are summarized as follows:

\begin{itemize}[leftmargin=*,nosep]
\setlength\itemsep{0.5em}
\item We analyze attention patterns by visualizing token-level attention scores, revealing that initial, neighboring, and separator tokens consistently receive high attention weights. Such empirical findings motivate us to propose SepLLM, \textbf{a new language modeling perspective} and a simple yet effective framework to accelerate inference.
\item Through targeted masking experiments on well-trained LLMs, we demonstrate that separator tokens contain crucial information and are essential for model performance. This finding suggests that sequences are initially segmented by separators, with segment information being compressed into these frequently-attended separator tokens while redundant specific tokens can be discarded.
\item We conduct comprehensive experiments to validate SepLLM's effectiveness across various tasks, datasets, and backbone models, examining performance in training-free, training-from-scratch, and post-training settings.
\item We have made our implementation publicly available at \href{https://sepllm.github.io/}{\textbf{sepllm.github.io}}. Our codebase supports efficient multi-node distributed training with accelerated attention module \textit{Sep-Attention} and also supports numerous existing Fusion Operators to accelerate the training process, such as \textit{fused rope}~\cite{su2024roformer}, \textit{fused layer norm}, etc.
\end{itemize}

\vspace{-1mm}
\section{Related Work}
\vspace{-0.25em}
\label{sec:related_work}

\textbf{KV Cache Compression}. Recent research has focused on overcoming LLMs' limitations in processing extensive contextual inputs. FastGen \citep{ge2023model} proposes an adaptive KV cache management method, optimizing memory usage by customizing retention strategies for different attention heads. SnapKV \citep{li2024snapkv} enhances efficiency through KV cache compression, utilizing attention scores to select and cluster significant positions. H$_2$O~\citep{zhang2024h2o} implements a dynamic token retention policy, balancing recent and historically important information to optimize memory use. StreamingLLM \citep{xiao2024efficient} expands LLMs' capabilities to handle infinite sequence lengths without fine-tuning, by reserving attention sinks and local tokens. 
QuickLLaMA~\cite{li2024quickllama} proposes to evict the query-aware KV cache for inference acceleration.
PyramidInfer~\cite{yang2024pyramidinfer} and PyramidKV \citep{zhang2024pyramidkv} modify the KV cache capacity across different layers, prioritizing larger allocations in the lower layers while reducing those in the upper layers. However, most works in this category cannot be applied into training phase.%

\textbf{Sparse Attention}. Sparse attention involves creating sparse attention matrices by limiting attention to predefined patterns, such as local windows or fixed-stride block patterns. \citet{beltagy2020longformer} combine dilated local window attention with task-specific global attention. BigBird \citep{zaheer2020big} proposes a linear-complexity attention alternative using global tokens, local sliding-window attention, and random attention. In comparison, SparseBERT \citep{shi2021sparsebert} proposes a differentiable attention mask algorithm to learn the attention mask in an end-to-end manner. Note that most works about sparse attention are using fixed masks and built on BERT \citep{devlin2018bert} families. In comparison, our proposed SepLLM is mainly built on GPT~\cite{brown2020language} series and its attention masks are data-dependent.

\vspace{-.9mm}
\section{Method}
\label{sec:method}
\subsection{Fundamental Design}
\label{sec:funda_design}
From Figure~\ref{fig:special_tokens}, we can observe that within a given input context, seemingly ``meaningless" separator tokens receive higher attention scores compared to tokens with actual semantic meanings. Therefore, we propose a novel Transformer architecture where, for a certain layer of the Transformer (\ie, a self-attention layer), each token in the input can only see a portion (not all) of the hidden states of tokens preceding the current token, outputted by the previous transformer layer. This subset of tokens includes a number of initial words (\eg, attention sinks~\cite{xiao2024efficient}), all the separator tokens before the current token, and the closest $\textit{\textbf{n}}$ tokens to the current token. Details are as follows.

\textbf{Initial Tokens}.
    When using the sliding window mechanism ~\cite{beltagy2020longformer} for generation, removing the key-value (KV) pairs corresponding to the initial tokens in the KV cache results in a noticeable increase in the perplexity of generated tokens, a phenomenon mentioned by ~\citet{xiao2024efficient}. The initial few tokens are also referred to as attention sinks. We retain this setup and further validate the role of initial tokens in subsequent experiments. Usually, $\textbf{\textit{a}}$ initial tokens are kept. 

    \textbf{Separator Tokens}. From Figure~\ref{fig:special_tokens}, we can observe that within a given input context, seemingly ``meaningless" separator tokens (such as commas, periods, exclamation marks, semicolons, etc.) that segment sequences receive higher attention scores compared to semantically meaningful tokens (such as nouns or verbs). Therefore, we hypothesize that these separators may compress the information of the text segments naturally segmented by them, such that when the Transformer generates new tokens, it only needs to reference the information contained in these separators to extract the information pertaining to those text segments. Hence, in a training-free scenario, we employed this strategy and achieved similar results to the original model based on full attention across many tasks. Furthermore, to reinforce the effect of using separators to compress information within their respective segments, we employed training-from-scratch and post-training approaches to compel the model during training to restrict the current token from accessing all information from distant preceding text, \ie, in each segment, only the separator representing its segment is visible to the current token (with other tokens being masked, see Figure~\ref{fig:pre-fill_gen_mask}).
    After training in this manner, the information within segments is forced to be condensed into the separators, leading the Transformer's probability distribution for predicting the next word closely resembling that of the original Transformer with full attention. See more in Appendices~\ref{app:dis} and~\ref{sec:appendix_fix_int}.
    
    \textbf{Neighboring Tokens}. Language tasks usually exhibit strong local dependencies and interactions, since adjacent tokens often form coherent phrases or have dependencies that are required to be captured. Neighboring tokens usually help form locally smooth and coherent contexts, allowing the model to generate sentences that are reasonable within the immediate context. The neighboring tokens, also referred to as local attention or sliding-window attention, are considered in various efficient Transformers \citep{xiao2024efficient,zhang2024h2o} and we have also adopted this approach, with the number of preceding tokens closest to the current token denoted as ``$\textit{\textbf{n}}$".

\begin{figure}
    \centering
    \includegraphics[width=\linewidth]{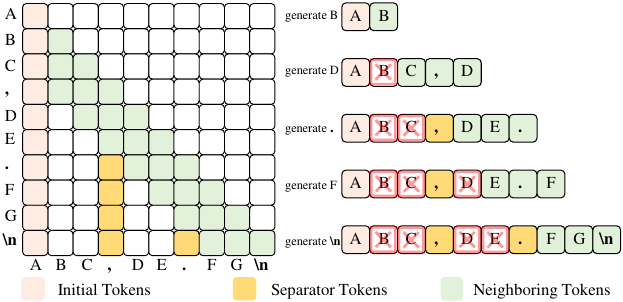}
    \vspace{-0.2in}
    \caption{The overall paradigm of~\model . The left side illustrates the attention mask in the training or pre-filling stage given the input ``ABC{$,$}DE{$.$}FG{$\backslash n$}". The right side illustrates the KV cache management in the generation stage.}
    \label{fig:pre-fill_gen_mask}
\end{figure}

\begin{figure*}[t]
    \begin{center}
    \includegraphics[width=0.99\linewidth]{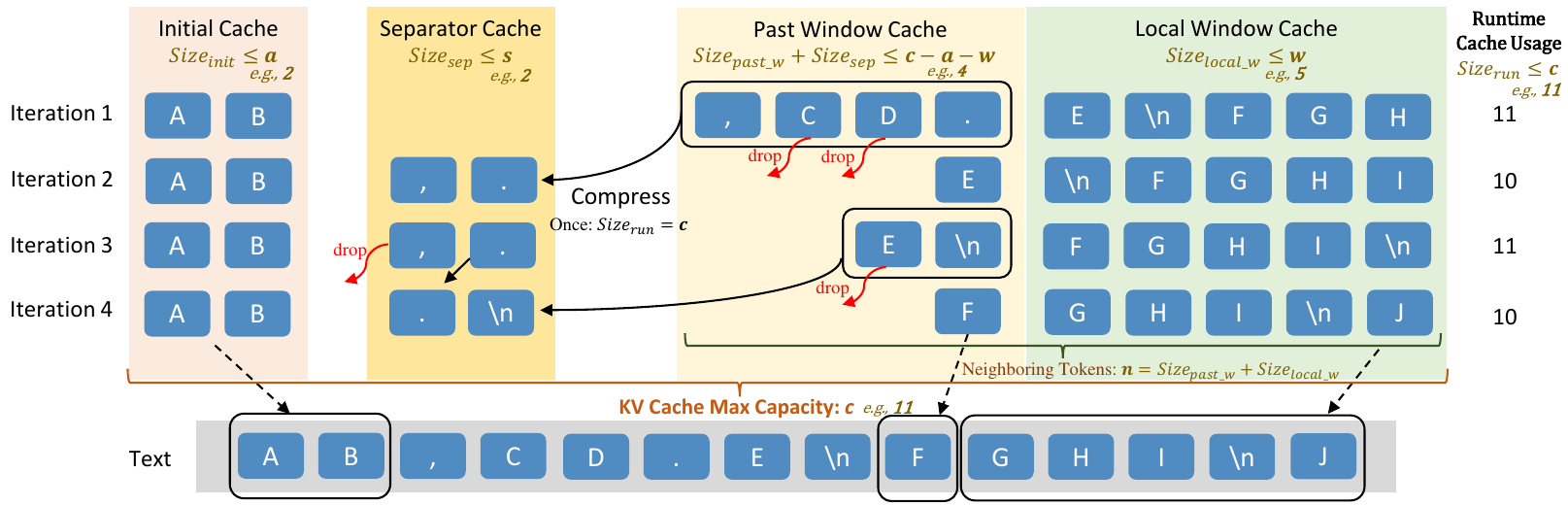}
    \end{center}
    \vspace{-0.18in}
    \caption{Overall framework of the proposed \model\ tailored for streaming applications. The KV pairs are storaged in four cache blocks (displayed as four columns), and are updated in each iteration (shown in a single row). Once the runtime usage $Size_{run}$ reach the max capacity $\textbf{c}$, SepLLM move KV caches of separator tokens in Past Window Cache into Separator Cache and drop other KV caches.}
    \label{fig:streaming_design}
    \vspace{-0.1in}
\end{figure*}

\subsection{Overall Pipeline}
We split the overall pipeline of our proposed SepLLM into training/pre-filling stage and generating stage. We also provide a theoretical analysis about the \textit{Universal Approximation} of~\model~in Appendices~\ref{sec:appendix_uni_appr} and~\ref{sec:appendix_proof4appr}.

\paragraph{Training/Pre-filling.} During the training/pre-filling stage of ~\model~ architecture, we do not need to multiply all query vectors corresponding to tokens in the input context with all the key vectors. It is sufficient to just multiply the vectors of the query-key pairs corresponding to the highlighted elements in the mask matrix shown in Figure~\ref{fig:pre-fill_gen_mask}. The formulation can be illustrated in the following.
\begin{align}
\A\! = \!\text{Softmax} \left(   \Lambda  \right),\Lambda \!= \!\frac{  \text{Mul} \left( \Q,~\K^{\top} \big| ~\M  \right)  }{ \sqrt{d_k}   }\vspace{-0.6in} \!\!\notag  \\
\O = \A \cdot \V~~~~~~~~~~~~~~~~~~~~~~~~~~
\label{eq:pre-fill_atten}
\end{align}
where $\Q \in \domR^{m \times d_k},\K \in \domR^{m \times d_k}$ are the matrices of query and key for one attention layer, in which each row vector $\Q_{i},\K_{j}$ correspond to the query of $i$-th token and the key of $j$-th token in the input context with sequence length $m$. $d_k$ denotes the dimension for key and query vectors. ${\Lambda}, \A \in \domR^{m \times m}$ are the raw and final attention maps, respectively. $\V \in \domR^{m \times d_v}$ is value matrix of dimension $d_v$ and $\O \in \domR^{m \times d_v}$ denotes the output for the current attention layer. $\text{Mul}(\cdot)$ represents a sparse matrix multiplication function which can be optimized by methods like~\citet{zhu2024sampleattention} and we also implement our own module named \textit{Sep-Attention} to accelerate this process. $\M \in \domB^{m \times m}$ is a binary mask matrix\!\footnote{$\domB\coloneqq\{0,1\}$, which is a binary set.} used as a parameter for $\text{Mul}(\cdot)$:
\begin{equation}
{\Lambda}_{i,j}  =\left\{
\begin{aligned}
\Q_{i}^{\top}\K_{j} / \sqrt{d_k}~, ~~& \text{if}~~\M_{i,j}=1 \\
-\infty~, ~~~~~~~~& \text{if}~~\M_{i,j}=0  \\
\end{aligned}
\right..
\end{equation}
where $\Lambda_{i,j},\A_{i,j},\M_{i,j}$ are the elements in the $i$-th row and $j$-th column of matrices $\Lambda,\A,\M$, respectively. 
Since ${\A_{i,j}}=0$ if ${\Lambda_{i,j}}=-\infty$, the tokens that are not \textit{Initial}, \textit{Separator}, and \textit{Neighboring} tokens will be masked by $\A \cdot \V$ in Equation~\ref{eq:pre-fill_atten}. This strategy (Equation~\ref{eq:pre-fill_atten}) applies to all heads of multi-head attention~\citep{NIPS2017_transformer}.

\paragraph{Generation.} The management of the KV cache during the generation stage for this \textit{Fundamental Design} (Section~\ref{sec:funda_design}) is also intuitive. 
As shown in the right side of Figure~\ref{fig:pre-fill_gen_mask}, when generating a new token, we only preserve the KV cache for the \textit{Initial}, \textit{Separator}, and \textit{Neighboring} tokens. Therefore, the KV cache in the proposed SepLLM is much smaller and requires less memory.
Ideally, based on~\model, the perplexity of generating the next word is comparable to that of the original Transformer with full attention.

\subsection{Tailored Streaming Design}
\label{sec:stream_design}
In real-world scenarios, there are numerous streaming applications such as multi-round dialogues, where long interactions are expected~\citep{xiao2024efficient}. Hence, we expect SepLLM to handle infinite input without significantly sacrificing efficiency and performance, especially for streaming applications. As discussed in \textit{Fundamental design} (Section~\ref{sec:funda_design}),  SepLLM can save a substantial amount of KV cache by retaining only the KV for separator, neighboring, and initial tokens. However, as the number of input tokens increases, the number of separators in KV cache will also accumulate endlessly, which is not feasible for streaming settings. Therefore, we propose \textit{Tailored Streaming Design} for streaming scenarios.
\paragraph{Framework.} Figure~\ref{fig:streaming_design} illustrates the~\model's processing architecture for streaming applications. The diagram depicts multiple iterations, with each row representing a distinct processing step. The system simultaneously maintains four specialized cache blocks: Initial Cache, Separator Cache, Past Window Cache, and Local Window Cache.
Specifically, Initial Cache captures the attention sinks proposed by \citet{xiao2024efficient}. Local Window and Past Window Caches store the KV  for consecutive tokens, with Past Window Cache serving as an overflow buffer for the Local Window Cache. Separator Cache retains the KV  for separators which contain condensed segment information. 

To describe the cache management strategies, we denote the runtime usage of the four caches as $Size_{\text{init}}$, $Size_{\text{sep}}$, $Size_{\text{past\_w}}$, and $Size_{\text{local\_w}}$, respectively. The runtime usage across all KV caches is defined as $Size_{\text{run}} \coloneqq Size_{\text{init}}+Size_{\text{sep}}+Size_{\text{past\_w}}+Size_{\text{local\_w}}$, which satisfies $Size_{\text{run}} \le \textbf{\textit{c}}$. The number of continuous Neighboring tokens is defined as $\textbf{\textit{n}}\!\coloneqq\!Size_{\text{past\_w}}\!+\!Size_{\text{local\_w}}$. Notably, \textbf{\textit{n}} is a function of the input sequence length $m$ (together with the specific input dataset $\mathcal{D}$) rather than a fixed hyperparameter for streaming setting.
For clarity, we detail the preset hyperparameters of this caching system as follows (\textit{Note}: $\textbf{\textit{a}}+\textbf{\textit{s}}+\textbf{\textit{w}}< \textit{\textbf{c}}$).
\vspace{-0.1in}
\begin{itemize}
    \item  \textbf{\textit{c}}: The maximum capacity of the entire KV cache.
    \vspace{-0.08in}
    \item  \textbf{\textit{a}}: The maximum capacity of Initial Cache.
    \vspace{-0.08in}
     \item  \textbf{\textit{s}}: The maximum capacity of Separator Cache. 
     \vspace{-0.08in}
    \item  \textbf{\textit{w}}: The maximum capacity of Local Window Cache. Notably, \textbf{\textit{w}} is also the minimum value of $\textbf{\textit{n}}$ after runtime KV cache usage $Size_{\text{run}}$ reaches $\textbf{\textit{c}}$ for the first time.
\end{itemize}

During streaming sequence generation, SepLLM will firstly fill Initial Cache and then Local Window Cache. After $Size_{\text{local\_w}}$ reaches $\textbf{\textit{w}}$, subsequent tokens are directed to Past Window Cache. Compression is triggered when $Size_{\text{run}}$ reaches $\textbf{\textit{c}}$ (iteration 1 in Figure~\ref{fig:streaming_design}), where separator tokens in Past Window Cache are moved to Separator Cache and other tokens are discarded.
When Separator Cache reaches its capacity $\textbf{\textit{s}}$ at some total input length $m_0$, $\textbf{\textit{n}}$ enters a periodic pattern. Specifically, for $m>m_0$, $\textbf{\textit{n}}$ follows a periodically linear function bounded by $\textbf{\textit{w}}$ and $\textbf{\textit{c}}-\textbf{\textit{a}}-\textbf{\textit{s}}$. The detailed evolution of KV caches is illustrated in Appendix~\ref{app:curve}. To analyze the average usage of KV cache, we define $\bar{\textbf{\textit{n}}}_m\coloneqq \frac{1}{m}\sum_{k=1}^m \textbf{\textit{n}}(k)$. According to the linearity and periodicity, we have
\begin{equation}
    \lim\limits_{m \to \infty}\!\bar{\textit{\textbf{n}}}_m  = \frac{  \textbf{\textit{w}} + \textbf{\textit{c}}-\textbf{\textit{a}}-\textbf{\textit{s}}  }{2}.
\end{equation}
For the average runtime KV cache usage of infinite-length sequence generation, we have
\begin{align}
\lim\limits_{m \to \infty}\!\overline{{Size}_{\text{run}}} &= \lim\limits_{m \to \infty}\! \bar{\textit{\textbf{n}}}_m + \textbf{\textit{a}}+\textbf{\textit{s}} \nonumber \\
&=\frac{  \textbf{\textit{w}} + \textbf{\textit{c}}+\textbf{\textit{a}}+\textbf{\textit{s}} }{2} < \textbf{\textit{c}}.
\end{align}

\paragraph{Positional Encoding.} Our positional encoding strategy for streaming settings is the same as the state-of-the-art StreamingLLM~\cite{xiao2024efficient}, designed specifically for infinite-length inputs, where we focus on positions within the cache instead of those in the original text.

\section{Experiments and Results}
\label{sec:experiments}

\subsection{Experimental Settings}
\label{sec:cmn_setup}
We evaluate our proposed~\model~on the following tasks, \ie, training-free, training-from-scratch, post-training, and streaming applications.

\vspace{-0.1in}
\paragraph{Model.} Two popular model families, \ie, Pythia \citep{biderman2023pythia} and Llama-3 \citep{dubey2024llama}, are employed for evaluation.
Specifically, Pythia-160m-deduped is used as the backbone in the training-from-scratch tasks since the model, data, configurations, and checkpoints are all open-source and the training results are reproducible.
As for post-training settings, we take Pythia-1.4B-deduped as our backbone model. 
Even though Llama-3 exhibits powerful performance on various downstream tasks, the training experimental details are not available. 
Therefore, we only use Llama-3 for training-free and streaming tasks.

\vspace{-0.1in}
\paragraph{Training Datasets.} In the training-from-scratch and post-training tasks, the deduplicated Pile \citep{gao2020pile} is utilized for training, which contains about 207B tokens. And all other configurations are the same as the corresponding settings as Pythia \citep{biderman2023pythia}. Specifically, the training epoch is set to 1.5 epoch (143000 steps with the global batch size as 1024), which means about 300B tokens in total are utilized for training from scratch, which is identical to Pythia~\cite{biderman2023pythia}.

\paragraph{Parameter Setting.} The official 93,000-step checkpoint of Pythia-1.4B-deduped model 
is used to conduct post-training, which corresponds to just completing one epoch of training on the 
deduped Pile dataset~\cite{gao2020pile}.\! And [``.", ``,", ``?", ``!", ``;", ``:", `` ", ``\textbackslash t", ``\textbackslash n"] are separator tokens used for all evaluations.
More specific experimental settings are introduced in the respective experiment sections.

\begin{table*}[t]
\centering
\small
\setlength{\tabcolsep}{8pt}
\setlength\extrarowheight{2.5pt}
\begin{tabular}{l|ccc|cccccc}
& \multicolumn{3}{c|}{GSM8K-CoT}  & \multicolumn{5}{c}{MMLU} \\               
& flexible  & strict    & \rKV~(\%)     & humanities     & stem     & social     & other       & Overall        & \rKV~(\%)   \\ \shline  
Vanilla                                 & 77.79    & 77.26    & 100.00         & 60.49        & 56.61    & 76.50        & 72.19        & 65.72         & 100.00  \\
StrmLLM (\pn=380)       & 70.89    & 71.42    & 47.54         & 57.73        & 54.46    & 74.39        & 70.13        & 63.39         & 52.50   \\
StrmLLM (\pn=256)       & 69.67    & 68.61    & 26.00         & 62.10        & 54.49    & 73.06        & 69.78        & 62.10         &  37.73  \\\rowcolor{Gray}
SepLLM (\pn=256)        & 77.18    & 77.18    & 47.36         & 57.66        & 56.49    & 76.21        & 72.19        & 64.68         & 44.61   \\
\end{tabular}
\vspace{-0.1in}
\caption{Evaluation results and average \textit{runtime} KV cache usage for training-free experiments on GSM8K-CoT 8-shots and MMLU 5-shots. For~\model~and StreamingLLM, three initial tokens' KV are kept for this experiment. \rKV~(\%) here represents the ratio of KV usage at \textit{runtime} for the respective method compared to Vanilla. See more results in Appendices~\ref{sec:appendix_fix_int} and Table~\ref{tab:fixed_sepllm}.}
\label{tab:training_free}
\end{table*}

\subsection{Training-Free}
We evaluate the proposed~\model~architecture in the training-free tasks based on the popular Llama-3-8B-Instruct model~\citep{dubey2024llama}.
\paragraph{Benchmarks.} The representative and commonly-used  GSM8K-CoT \citep{cobbe2021training} and MMLU~\citep{hendrycks2021measuring}) are adopted. 
{{GSM8K-CoT}}~\cite{cobbe2021training}  tests a model's ability to solve mathematical problems by evaluating its reasoning and step-by-step problem-solving skills. {CoT}~\cite{wei2022chain} means the ability to simulate a reasoning process by breaking down complex problems into a series of logical steps. And the default 8 shots are adopted.
{{MMLU}}~\cite{hendrycks2021measuring} assesses a model's general knowledge and reasoning ability across a wide range of subjects, such as history, science, mathematics and so on.
The commonly-used 5-shot setting is used for {MMLU}.
\begin{table*}[ht]
\small
\setlength{\tabcolsep}{7pt}
\setlength\extrarowheight{2pt}
\centering
\begin{tabular}{l|ccccccc|cc}
Method                             & ARC-c  & ARC-e  & LBD-ppl  & LBD-acc & LogiQA & PIQA   & SciQ   & Atten. (\%) & \rKV~(\%) \\ \shline
Vanilla                            & 20.14  & 46.80  & 34.83    & 33.28   & 23.81  & 62.84  & 81.50  & 100.00   & 100.00   \\
StrmLLM(\pn=64)    & 20.65  & 47.39  & 44.03    & 26.74   & 21.97  & 63.82  & 75.80  & 16.58    & 15.28    \\\rowcolor{Gray}
\model (\pn=64)    & 19.62  & 46.46  & 40.08    & 28.97   & 26.42  & 63.82  & 80.10  & 25.83    & 25.40    \\\rowcolor{Gray}
\model (\pn=128)   & 19.97  & 47.35  & 30.16    & 33.18   & 22.73  & 64.64  & 82.60  & 35.64    & 32.27    \\\rowcolor{Gray}
\model (\pn=64,H)  & 20.73  & 48.44  & 36.54    & 30.45   & 25.35  & 64.36  & 80.60  & 32.01    & 31.58    \\\rowcolor{Gray}
\model(\pn=64,H/T) & 21.42  & 47.26  & 33.41    & 32.80   & 22.73  & 63.98  & 81.20  & 38.18    & 37.75   
\end{tabular}
\vspace{-0.1in}
\caption{The performance of downstream tasks and the average \textit{runtime} KV cache usage in the training-from-scratch setting.}
\label{tab:trn_downstream}
\end{table*}

\paragraph{Results.}The experimental results for training-free are shown in Table~\ref{tab:training_free}. ``Vanilla" represents the original Llama-3 model with full attention, while ``StrmLLM" represents StreamingLLM~\cite{xiao2024efficient}.~\pn~means the number of KV for Neighboring Tokens we retain. 
For~\model, all the KV for Separator Tokens are kept and for the setting \textit{SepLLM (\textbf{n}=256)}, we find that~\model~exhibits comparable performance in both multi-step mathematical CoT task and multidisciplinary knowledge reasoning tasks, when compared to the full-attention Llama-3.~\model~achieves this using only 47.36\% of the KV utilized by the original Llama-3 for reasoning, indicating~\model's capability of modeling both the contexts requiring multi-step logical analysis and those involving multi-domain knowledge reasoning while retaining only 50\% original KV.

\textit{StrmLLM (\textbf{n}=256)} setting corresponds to removing all separators' KV from \textit{~\model~(\textbf{n}=256)} setting, except for those in Neighboring and Initial tokens. We observe a noticeable decrease in both mathematical analysis and multidisciplinary knowledge reasoning abilities for \textit{StrmLLM (\textbf{n}=256)}. \textit{StrmLLM (\textbf{n}=256)} utilizes only 26.00\% and 37.73\% of the KV for the GSM8K and MMLU tasks, respectively, which are less than \textit{~\model~(\textbf{n}=256)} (47.36\% and 44.61\% respectively). Consequently, we increase the~\pn~of \textit{StrmLLM} to 380, aligning the kept KV on GSM8K to be equal to \textit{~\model~(\textbf{n}=256)} (approximately 47\%, while on MMLU task, \textit{StrmLLM (\textbf{n}=380)} retains 52.5\% of the KV, significantly higher than \textit{~\model~(\textbf{n}=256)}). This leads to improved performance compared to \textit{StrmLLM (\textbf{n}=256)}. However, it still remains lower than the full-attention Llama-3 and \textit{~\model~(\textbf{n}=256)}. This indicates that the KV of separators indeed encapsulates information contained within their respective segments, and removing them significantly impacts the Transformer's understanding and reasoning abilities.

\begin{figure}[t]
    \centering
    \vspace{-2mm}
    \subfigure[Loss \wrt~steps]{ \includegraphics[width=0.47\columnwidth]{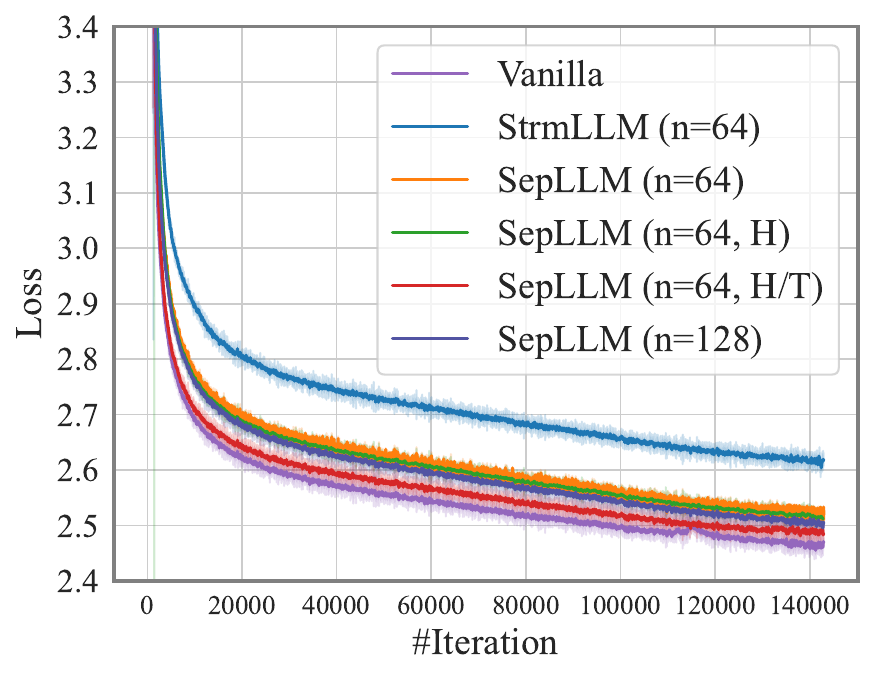}\label{fig:training_loss_a}}
    \subfigure[Loss Ratio \wrt~FLOPs]{\includegraphics[width=0.47\columnwidth]{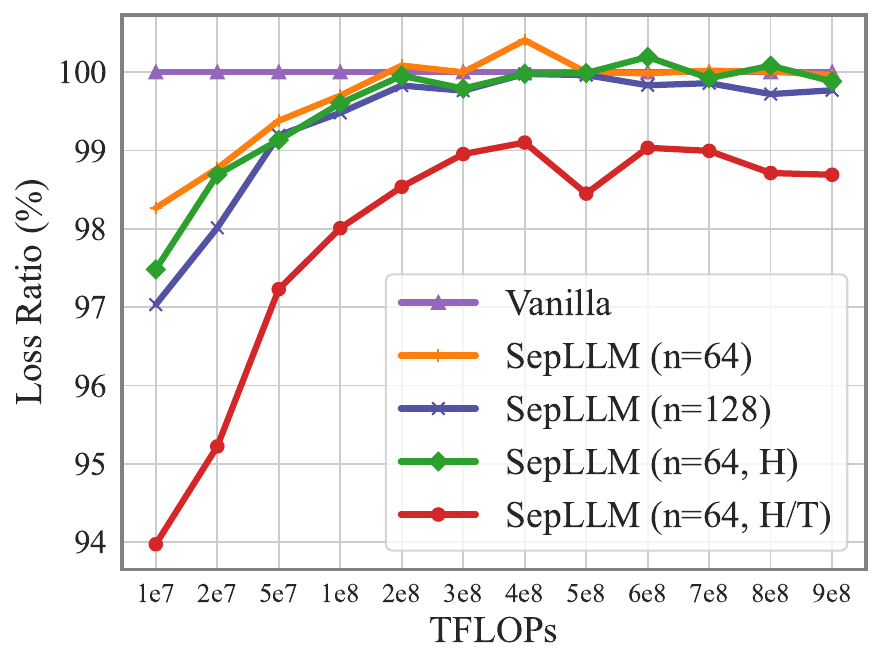}\label{fig:training_loss_b}}
    \vspace{-0.16in}
    \caption{Training loss curves for training from scratch. \ref{fig:training_loss_b} shows the ratios of the loss values of different methods to that of Vanilla with respect to FLOPs.}
    \label{fig:training_loss}
    \vspace{-0.15in}
\end{figure}

\subsection{Training from Scratch}
\label{sec:trn_scratch}

We train the original Pythia-160m-deduped model as well as the Pythia-160m-deduped model modified with the~\model~(and StreamingLLM) architecture on the Pile dataset for 143,000 steps using a global batch size of 1024 (involving approximately 300B tokens in total for training). All training configurations are consistent with Pythia~\citep{biderman2023pythia} (see Section~\ref{sec:cmn_setup}). And following Pythia~\citep{biderman2023pythia}, we conduct tests on the following downstream tasks: ARC-Challenge and ARC-Easy \citep{clark2018think}, LAMBADA \citep{paperno-etal-2016-lambada} (for Perplexity and Accuracy), LogiQA~\citep{liu2020logiqachallengedatasetmachine}, PIQA~\citep{bisk2020piqa},  SciQA~\citep{welbl2017crowdsourcing}. From the loss curves depicted in Figure~\ref{fig:training_loss} and the downstream performance in Table~\ref{tab:trn_downstream}, we draw the following analysis.
\paragraph{Neighboring Token Benefits.} Based on the experiments with the settings \textit{SepLLM (\textbf{n}=64)} and \textit{SepLLM (\textbf{n}=128)}, we find that during training, increasing Neighboring Tokens (\pn) leads to a faster decrease in the training loss curve (Figure~\ref{fig:training_loss}).  Furthermore, models trained with larger~\pn~exhibit stronger performance in downstream tasks (Table~\ref{tab:trn_downstream}). This highlights the important role of neighboring tokens in contextual language modeling and downstream task inference.    
\paragraph{Hybrid Layer Benefits.}  We find that employing a certain hybrid architecture is beneficial to both the training loss and the performance on downstream tasks. For instance, by modifying only the first self-attention layer to full attention in the experiment corresponding to \textit{SepLLM (\textbf{n}=64)} (denoted as \textit{SepLLM (\textbf{n}=64,H)}), there is a moderate optimization in both the training process and downstream tasks. If both the first and last attention layers are changed to full attention (denoted as \textit{SepLLM (\textbf{n}=64,H/T)}), this optimization becomes more pronounced. For example, LAMBADA perplexity decreases from 40.08 for \textit{SepLLM (\textbf{n}=64)} to 36.54 for \textit{SepLLM (\textbf{n}=64,H)}  and 33.41 for \textit{SepLLM (\textbf{n}=64,H/T)}.
\paragraph{Separators' Role.} The experiment with the setting \textit{StrmLLM (\textbf{n}=64)} corresponds to \textit{SepLLM (\textbf{n}=64)}, but does not consider separators other than those in Neighboring and Initial tokens. We observe a significant slowdown in the training loss decrease for \textit{StrmLLM (\textbf{n}=64)}, and its performance deteriorates across various downstream tasks. This indicates that the KV corresponding to separators indeed contain information about the segments they belong to, which are beneficial to predicting subsequent tokens.

\begin{table}[t]
\centering
\small
\setlength{\tabcolsep}{2pt}
\renewcommand{\arraystretch}{1.25}
\begin{tabular}{l|c|cccc|c}
Arch.      & StrmLLM & \multicolumn{4}{c|}{SepLLM}  & Vanilla                                                                                                      \\ 
Setting           & \pn=64    & \multicolumn{1}{c}{\pn=64}  & \multicolumn{1}{c}{\pn=128} & \multicolumn{1}{c}{\pn=64,H} & \multicolumn{1}{c|}{\pn=64,H/T} & full \\ \shline
FLOPs (\%)   & 70.11   & \multicolumn{1}{c}{71.77} & \multicolumn{1}{c}{72.58} & \multicolumn{1}{c}{72.83}  & \multicolumn{1}{c|}{73.90}    & 100.0        \\ 
Atten. (\%) & 6.43    & \multicolumn{1}{c}{17.21} & \multicolumn{1}{c}{22.48} & \multicolumn{1}{c}{24.11}  & \multicolumn{1}{c|}{31.01}    & 100.0   \\
\end{tabular}
\vspace{-2mm}
\caption{The comparison of FLOPs and Attention Map Ratios.}
\label{tab:tab_flops}
\vspace{-3.5mm}
\end{table}

We also investigate the FLOPs and Attention Map Ratio (indicating the proportion of '1's in the lower triangle of the attention mask) required by the different architectures when trained on the same input data. As shown in Table~\ref{tab:tab_flops}, We find that~\model~can significantly reduce FLOPs by approximately 30\%. After plotting the loss ratios between~\model~and Vanilla under the same FLOPs (see Figure~\ref{fig:training_loss_b}), we observe that SepLLM has lower loss than Vanilla. This indicates that our~\model~architecture at least has a comparable ability to extract useful information from the dataset during training as Vanilla. Besides, the detailed wall-clock time per iteration and the wall-clock time speedups are illustrated in Appendix~\ref{app:training_acc} and Figure~\ref{fig:TFLOPs}.

\subsection{Post-Training}
Since training from scratch is time-consuming, we also conduct post-training experiments using 93000-step Pythia-1.4B-deduped checkpoint officially released by Pythia (see Section~\ref{sec:cmn_setup} for details). Figure~\ref{fig:post_training_loss} displays the loss curves for post-training, where \textit{SepLLM (\textbf{n}=64, larger lr)} denotes we employ an entire cosine learning rate scheduler (including a warm-up process starting from 0) identical to that of original Pythia-1.4B-deduped from step 0. \textit{SepLLM (\textbf{n}=64)} and \textit{SepLLM (\textbf{n}=128)} utilize a cosine learning rate scheduler that continues to decay from the 93000th step. From Figure~\ref{fig:post_training_loss}, it is evident that increasing~\pn~and appropriately raising the learning rate both facilitate the decrease in loss. Moreover, this also illustrates that~\model~can achieve a swift transformation from a full-attention LLM checkpoint to a model that aligns with the requirements of the~\model~architecture's embedding distribution through post-training.

\begin{figure}[t]
    \vspace{-0.25mm}
    \centering
    \includegraphics[width=0.75\linewidth]{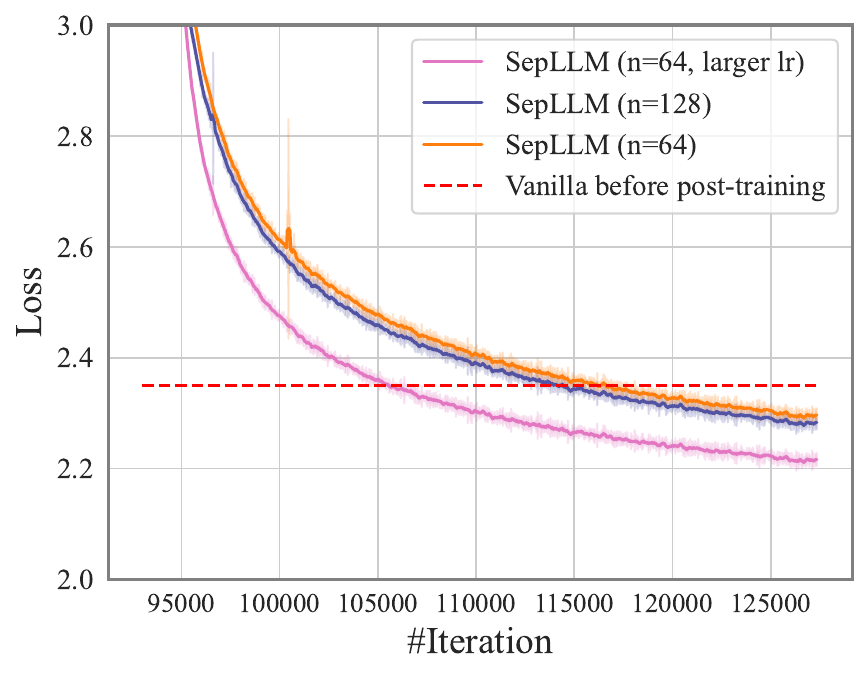}
    \vspace{-0.2in}
    \caption{Training loss curves for the post-training setting.}
    \label{fig:post_training_loss}
    \vspace{-0.2mm}
\end{figure}

\subsection{Streaming Applications}
\label{sec:stream_app_eval}
\model~can also adapt well to streaming applications, where infinite-length interactions may occur. Here, we follow StreamingLLM~\cite{xiao2024efficient} to validate the scenarios of infinite-length interactions using our \textit{Tailored Streaming Design} on the commonly used PG19 dataset~\cite{raecompressive2019}, which comprises 100 extensive literary works. The results are shown in Table~\ref{tab:stream_pg19}. We can observe that for the same KV cache capacity \textbf{\textit{c}}, the average perplexity of predicting the next token through~\model~remains consistently lower than that of streamingLLM~\cite{xiao2024efficient} within the range from 1M to 4M input length. This once again verifies the ability of KV corresponding to separators to compress segment information and their impact on predicting the probability distribution of the next token. 

\begin{table}[t]
\centering
\small
\setlength{\tabcolsep}{3.25pt}
\renewcommand{\arraystretch}{1.3}
\begin{tabular}{l|ccccccc}
\textbf{PG19} & 1M   & 1.5M & 2M   & 2.5M & 3M   & 3.5M & 4M   \\ \shline
StrmLLM        & 39.5 & 38.2 & 38.3 & 37.6 & 36.4 & 35.8 & 36.1 \\ \rowcolor{Gray}
SepLLM (\textbf{\textit{s}}=32)          & 37.7 & 36.6 & 36.6 & 36.0 & 34.9 & 34.2 & 34.5 \\ \rowcolor{Gray}
SepLLM (\textbf{\textit{s}}=64)          & 37.1 & 36.0 & 36.1 & 35.4 & 34.3 & 33.7 & 33.9 \\ 
\end{tabular}
\vspace{-0.1in}
\caption{The perplexity comparison on the PG19 test set~\cite{raecompressive2019}. For fair evaluation, we keep the entire KV cache capacity \textit{\textbf{c}} as 324 and Initial Cache capacity \textit{\textbf{a}} as 4 for both StreamingLLM and~\model. \textit{\textbf{w}}=224, \textbf{\textit{s}}=32/64 for~\model. \label{tab:stream_pg19}}
\vspace{-5.5mm}
\end{table}

We also test the end-to-end inference time of Vanilla, StreamingLLM and our SepLLM on PG19~\cite{raecompressive2019} test set based on LlaMA-3-8B~\cite{dubey2024llama}. Based on the aforementioned settings, we used these LLMs to generate 20K and 64K tokens to evaluate their total inference time (wall-clock time), average perplexity, and average runtime KV cache usage. For both SepLLM and StreamingLLM, the maximum whole KV cache capacity was set to 800 (\ie, \textbf{\textit{c}}=800), and the Initial Cache capacity was set to 4 (\ie, \textbf{\textit{a}}=4). For SepLLM, we additionally set \textbf{\textit{s}}=64 and \textbf{\textit{w}}=256. The results are shown in Table~\ref{tab:infer_time}.
\begin{table}[t]
    \centering
    \small
    \setlength{\tabcolsep}{7pt}
    \renewcommand{\arraystretch}{1.2}
    \begin{tabular}{l|l|cccc}
        Length & Methods & $\textit{\textbf{c}}$ & \rKV & ppl & time (s) \\ \shline
        \multirow{3}{*}{20K} & Vanilla & 20K & 10K & 302.6 & 523.8 \\ 
        ~ & StrmLLM & 800 & 800 & 31.5 & 341.2 \\ 
        ~ &\cellcolor{Gray} SepLLM &\cellcolor{Gray} 800 &\cellcolor{Gray} 562 &\cellcolor{Gray} 28.3 &\cellcolor{Gray} 325.8 \\ \hline
        \multirow{3}{*}{64K} & Vanilla & 64K & 32K & 1090.8 & 3380.6 \\ 
        ~ & StrmLLM & 800 & 800 & 37.9 & 1096.0 \\ 
        ~ &\cellcolor{Gray} SepLLM &\cellcolor{Gray} 800 &\cellcolor{Gray} 562 &\cellcolor{Gray} 33.4 &\cellcolor{Gray} 1049.7 \\ 
    \end{tabular}
    \vspace{-2mm}
    \caption{The average perplexity and running time comparison on the PG19 test set~\cite{raecompressive2019}. \rKV~means the average \textit{runtime} KV cache usage in the generation process.}
    \label{tab:infer_time}
    \vspace{-2mm}
\end{table}

Those results demonstrate that our SepLLM can achieve lower perplexity with less wall-clock time as well as lower average runtime KV usage, especially for longer sequences, given the same max KV cache capacity $\textit{\textbf{c}}$.
\vspace{-0.5mm}
\subsection{Ablation Study}
\label{sec:ablation_study}
\vspace{-0.5mm}

\paragraph{Hyperparameters.} We conduct various ablation experiments specifically for long-input applications. This includes a detailed study of the impact of various hyperparameters across different text lengths (5K to 20K). The experimental results about \textbf{\textit{s}} and (\textbf{\textit{w}},\textbf{\textit{c}}) pair are illustrated in Table~\ref{tab:segcache_size} and Table~\ref{tab:window_size} respectively. The conclusions are as follows.

\begin{table}[t]
\centering
\small
    \setlength{\tabcolsep}{8pt}
    \renewcommand{\arraystretch}{1.2}
\begin{tabular}{c|ccccc}
        \textbf{\textit{s}}             & 5K                   & 10K                  & 15K                  & 20K                  & \rKV \\ \shline
32 & 13.11              & 11.31              & 8.74               & 8.79               & 292                  \\
48 & 13.03              & 11.26             & 8.70             & 8.76               & 300                  \\
64  & 13.01              & 11.17              & 8.67               & 8.72               & 308    
\end{tabular}
\vspace{-2mm}
\caption{The perplexity and average \textit{runtime} KV cache usage of SepLLM with respect to different Separator Cache capacities (\textit{\textbf{s}}) on WikiText~\cite{merity2016pointer}, in which \textit{\textbf{a}}=4, \textit{\textbf{w}}=224, \textit{\textbf{c}}=324.} 
\label{tab:segcache_size}
\vspace{-3.8mm}
\end{table}

\begin{table}[t]
\centering
\small
\setlength{\tabcolsep}{5pt}
\renewcommand{\arraystretch}{1.2}
\begin{tabular}{c|ccc|cccc}
Method &  \textbf{\textit{w}}   & \textbf{\textit{c}}   & \rKV & 5K      & 10K     & 15K    & 20K    \\ \shline
      & 320 & 324 & 324   & 13.18 & 11.51 & 8.85 & 8.91 \\
StrmLLM      & 512 & 516 & 516   & 12.87 & 11.37 & 8.74 & 8.78 \\
     & 796 & 800 & 800   & 11.96 & 11.01 & 8.67 & 8.72 \\ \hline\rowcolor{Gray}
    & 224 & 324 & 308   & 13.01 & 11.17 & 8.67 & 8.72 \\\rowcolor{Gray}
SepLLM    & 320 & 516 & 452   & 12.91 & 11.26 & 8.67 & 8.72 \\\rowcolor{Gray}
 & 512 & 800 & 690   & 12.09 & 11.03 & 8.56 & 8.62
\end{tabular}
\vspace{-3mm}
\caption{{Average downstream performance (ppl) over different input lengths and average \textit{runtime} KV usage with different \textit{\textbf{c}},\textit{\textbf{w}} on WikiText, in which \textit{\textbf{a}}=4 for both methods and \textit{\textbf{s}}=64 for SepLLM.}}
\label{tab:window_size}
\vspace{-1.9mm}
\end{table}

\begin{itemize}[leftmargin=*,nosep]
\setlength\itemsep{0.6em}
    \item  \textbf{\textit{s}}: From Table~\ref{tab:segcache_size}, the capacity of Separator Cache affects the perplexity of long-text inference, as we find that increasing \textit{\textbf{s}} leads to a certain degree of perplexity reduction.   
    \item  \textbf{\textit{c}} and \textbf{\textit{w}}:  As can be seen in Table~\ref{tab:window_size}, \textbf{\textit{c}} and \textbf{\textit{w}} can impact the average perplexity in the scenario of long streaming input with lengthy text. Moreover, as they increase, the perplexity decreases accordingly.  
\end{itemize}

\paragraph{Settings.} We also perform the following experiments to validate the effectiveness of Initial Tokens and Positional Encoding's shifting for both StreamingLLM and SepLLM. The experimental results are shown in Table~\ref{tab:abl_sink_shfit} and the discussions are as follows.

\begin{itemize}[leftmargin=*,nosep]
\setlength\itemsep{0.6em}
    \item  \textbf{Initial Tokens}: Initial Tokens are crucial for modeling context of long streaming inputs, whether for~\model~or StreamingLLM. Removing them has a significant impact on the perplexity of long texts. This conclusion is consistent with the paper~\cite{xiao2024efficient}.
    \item  \textbf{Positional Encoding's Shifting}. Following streamingLLM, we conduct Positional Shifting for streaming applications, \ie, we focus on positions within the cache rather than those in the original text. Table~\ref{tab:abl_sink_shfit} shows that this shifting plays a crucial role, as removing it significantly increases the perplexity (StreamingLLM increases from around 13 to over 400). It is noteworthy that~\model, even without employing this shifting, only sees a perplexity increase to around 200, which is much lower than StreamingLLM. This further underscores the separators' role for the stability in predicting tokens.
\end{itemize}

\paragraph{Separator Choices.} A series of ablation studies are also conducted on the choice of separators. For SepLLM, whether it is the \textit{Fundamental Design} or the \textit{Tailored Streaming Design}, the choice of separator types serves as a kind of hyperparameter. Given that the types of common separators are limited (\eg, the ones we adopt here [``.", ``,", ``?", ``!", ``;", ``:", `` ", ``\textbackslash t", ``\textbackslash n"]), we recommend including all commonly used ones. A detailed discussion of the ablation studies on separators is provided in Appendix~\ref{app:sep_choice}.

\begin{table}[!t]
\centering
\small
\setlength{\tabcolsep}{3.5pt}
\renewcommand{\arraystretch}{1.2}
\begin{tabular}{lcc|ccccc}
Method & initial & shift & 5K       & 10K      & 15K      & 20K      & \rKV \\ \shline
StrmLLM   & \cmark    & \cmark     & 13.2  & 11.5  & 8.9   & 8.9   & 324  \\
StrmLLM   & \xmark    & \cmark     & 14.6  & 13.2  & 10.8  & 10.9  & 324  \\
StrmLLM   & \cmark    & \xmark     & 425.5 & 513.1 & 509.5 & 506.8 & 324  \\
StrmLLM   & \xmark    & \xmark     & 409.4 & 540.5 & 527.5 & 558.2 & 324  \\ \hline\rowcolor{Gray}
SepLLM   & \cmark    & \cmark     & 13.1  & 11.3  & 8.7   & 8.8   & 292  \\\rowcolor{Gray}
SepLLM   & \xmark    & \cmark     & 14.9  & 14.3  & 12.4  & 12.5  & 290  \\\rowcolor{Gray}
SepLLM   & \cmark    & \xmark     & 192.7 & 214.6 & 175.0 & 174.4 & 292  \\\rowcolor{Gray}
SepLLM   & \xmark    & \xmark     & 226.4 & 264.7 & 227.5 & 228.8 & 290 
\end{tabular}
\vspace{-2mm}
\caption{ The perplexity and average \textit{runtime} KV cache usage of SepLLM and StreamingLLM tested on WikiText~\cite{merity2016pointer}. \textit{\textbf{c}}=324, \textit{\textbf{a}}=0/4 for both methods. \textit{\textbf{s}}=32,\textit{\textbf{w}}=224 for SepLLM}.
\label{tab:abl_sink_shfit}
\vspace{-6mm}
\end{table}

\subsection{Comparison with More Baselines and Variants}
\paragraph{Naive Baseline.} We compare SepLLM with more baselines and variants to verify its effectiveness. The results shown in Table~\ref{tab:sparse_hd_cmp} are based on a naive baseline in which $H_s$ out of $H$ attention heads are configured as sliding window attention, while the remaining heads use full attention. We use MMLU as the benchmark to measure the differences in reasoning ability, ensuring that the runtime KV usage for all methods is kept as consistent as possible. It can be observed that this naive head-wise baseline performs poorly (regardless of how many full attention heads are retained), which is due to the heterogeneity among the heads. In contrast, SepLLM achieves performance very close to Vanilla across various knowledge domains with the same KV usage.
\paragraph{State-of-the-Art Baselines.} In addition, we compare SepLLM with multiple state-of-the-art baseline methods, which demonstrates the effectiveness and simplicity of SepLLM. For detailed results, please refer to the Appendix~\ref{app:more_cmp}.
\paragraph{Fixed-Interval Variant.} Furthermore, to demonstrate the compression and summarization effects of separators on the information within the segments they divide, we propose a fixed-interval variant, FixLLM. In FixLLM, instead of attending to separator tokens, it attends to one token at fixed intervals, except for the Initial Tokens' and Neighboring Tokens' sections. We find that SepLLM significantly outperforms FixLLM, which indicates that the compression and summarization capabilities of separators in SepLLM cannot be replaced by fixed-interval tokens. Please see the Appendix~\ref{sec:appendix_fix_int} for detailed results and discussions.

\subsection{Generalization and Information Retrieval}
To verify the generalization of~\model, we adapt~\model~to models of different architectures and scales. The results and discussions in Appendix~\ref{app:diff} can validate the generalization capability of our proposed~\model. 
Specifically, we adapt~\model~to different backbones including Pythia-6.9B, Pythia-12B~\cite{biderman2023pythia}, Llama-3-8B-Base/Instruct~\cite{dubey2024llama} and Falcon-40B~\cite{falcon40b}.
Moreover, we also conduct the \textit{Needle-in-a-Haystack}
experiment, which further demonstrates the compression capability of separator tokens for segment information. As illustrated in Appendix~\ref{sec:append_needle}, SepLLM can retrieve the needle in most scenarios. In comparison, StreamingLLM~\cite{xiao2024efficient} cannot complete this task. In addition, we provide comparisons of SepLLM with more baseline models regarding mathematical reasoning and logical analysis capabilities in the Appendix~\ref{app:more_cmp}. Furthermore, Appendices~\ref{app:dis} and~\ref{sec:appendix_fix_int} discuss in detail the effects and functions of separators from the perspective of implementation logic.

\begin{table}[t]
\centering
\small
\setlength{\tabcolsep}{0.7mm}
\setlength\extrarowheight{2.5pt}
\begin{tabular}{c|ccccc|cc}
 &
  \multicolumn{5}{c|}{MMLU} &
   &
   \\ \cline{2-6}
\multirow{-2}{*}{$H_s$/$H$} &
  humanity &
  social &
  stem &
  other &
  overall &
  \multirow{-2}{*}{\rKV~(\%)} &
  \multirow{-2}{*}{\pn} \\ \shline
20/32 &     23.93 & 23.07 & 24.42 & 23.53 & 23.76 & 44.74 & 80 \\
24/32 &     24.23 & 23.30 & 26.36 & 23.37 & 24.31 & 47.71 & 208 \\
28/32 &     25.66 & 27.29 & 26.31 & 23.69 & 25.81 & 45.13 & 256 \\
30/32 &     27.29 & 25.09 & 27.78 & 38.11 & 29.31 & 45.42 & 288 \\ \hline \rowcolor{Gray}
SepLLM &    57.66 & 76.21 & 56.49 & 72.19 & 64.68 & 44.61 & 256 \\\rowcolor{Gray}
Vanilla &   60.49 & 76.50 & 56.61 & 72.19 & 65.72 & 100.00 & ALL
\end{tabular}
\caption{Comparison with a naive baseline method, where $H_s$ out of $H$  attention heads are set to sliding window attention, and the remaining heads use full attention. \pn~represents the number of Neighboring Tokens (\ie, the sliding window size). \rKV~(\%) indicates the ratio of KV usage at \textit{runtime} for the respective method compared to Vanilla.}
\label{tab:sparse_hd_cmp}
\end{table}

\section{Universal Approximation} %

In this section, we present the theoretical results on the universal approximation capabilities of encoder-based SepLLM. Details can be found in Appendices~\ref{sec:appendix_uni_appr} and~\ref{sec:appendix_proof4appr}.

Let $\mathcal{T}_{\text{Sep}}^{H,d_h,d_f}$ be the class of SepLLM, where $H$, $d_h$, and $d_f$ represent the number of heads, hidden dimension in attention layers, and the hidden dimension of feed-forward layers, respectively. 
Denote $\mathcal{F}$ as the class of continuous functions $f: [0,1]^{d \times n} \to \mathbb{R}^{d \times n}$, where $d$ and $n$ represent the dimensionality of input tokens and the sequence length, respectively. For any $p\geq 1$, we use the $\ell_{p}$ distance to measure the difference between two continuous functions $f_1, f_2 \in \mathcal{F}$, defined as $\left(\int_{[0,1]^{d \times n}} \left\|f_1(\bm{X}) - f_2(\bm{X}) \right\|_{p}^{p} d \bm{X} \right)^{1/p}$. The following theorem shows that the proposed SepLLM holds the universal approximation to arbitrarily sequence-to-sequence continuous functions. For the complete proof and derivation process, please refer to the Appendices~\ref{sec:appendix_uni_appr} and~\ref{sec:appendix_proof4appr}.

\begin{theorem}
    Given $p>1$ and $n>2$, for any $\epsilon>0$ and $f \in \mathcal{F}$, there exists a SepLLM $g \in \mathcal{T}_{\text{Sep}}^{2,1,4}$, such that $d_{p}\left(f,g \right) < \epsilon$.
\end{theorem}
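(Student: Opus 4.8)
The plan is to follow the by-now standard template for proving universal approximation of Transformer-type models --- the dense argument of Yun et al.\ together with its sparse-attention refinement --- and to adapt it to the particular data-dependent mask of SepLLM, working throughout with the bidirectional (encoder) variant in which each token attends to the initial tokens, its neighboring window, and all separator tokens. Concretely I would (1) reduce the target $f$ to a function $\bar f$ that is piecewise constant on a cubic grid; (2) build a ``hardmax'' SepLLM realizing $\bar f$ on quantized inputs as a composition of three blocks --- coordinatewise quantization, a contextual map assigning a unique identifier to each quantized sequence, and a value lookup; and (3) approximate this hardmax model to arbitrary $\ell_p$ accuracy by a genuine softmax SepLLM $g \in \mathcal{T}_{\text{Sep}}^{2,1,4}$ by scaling up the attention logits. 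Combining the three error terms yields $d_p(f,g) < \epsilon$.

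For the reduction, uniform continuity of $f$ on the compact cube $[0,1]^{d \times n}$ gives, for any grid resolution $1/N$, a grid-piecewise-constant $\bar f$ with $d_p(f,\bar f)$ as small as desired; since the model only has to match $\bar f$ off the measure-zero cell boundaries, this loses nothing in the $\ell_p$ metric after a further arbitrarily small truncation near those boundaries. The first block consists of feed-forward-only layers implementing the coordinatewise map $x \mapsto \lfloor Nx\rfloor/N$ as a sum of scaled ramp/threshold units, and the third block consists of feed-forward layers mapping each distinct contextual identifier to the corresponding value of $\bar f$, again as a sum of localized bumps. Neither block uses attention, so the SepLLM sparsity pattern is irrelevant for them, and the width $d_f=4$ together with the head count and head size are exactly those that suffice in the dense construction.

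The crux is the middle block. In the dense proof the contextual map is built from a stack of ``selective shift'' self-attention layers, each of which requires some token to attend to all others; under the SepLLM mask this global aggregation must instead be routed through a hub token --- a separator token, or, absent a content separator, the first initial token, which every position attends to by construction. The hub collects each token's quantized value into its own representation over $O(n)$ layers, computes a unique sequence identifier there using the interval-arithmetic device of Yun et al., and then broadcasts it back to all tokens, which all attend to the hub. This is precisely the mechanism used to establish universal approximation for general sparse Transformers, and it preserves the $\mathcal{T}_{\text{Sep}}^{2,1,4}$ budget because the extra layers increase only depth, not the number of heads, the head dimension, or the feed-forward width; the hypothesis $n>2$ is what keeps the hub routing non-degenerate.

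The main obstacle will be verifying that the SepLLM mask genuinely supports this hub-routing scheme, that is, that it satisfies the connectivity hypotheses of the sparse universal-approximation theorem: there must be an always-attended token; the composition of the masks across the stacked layers must let that token reach, and be reached from, every position (which the initial-token window together with the separator structure should provide over $O(1)$ hops per round); and the selective-shift bookkeeping must keep all attention-logit gaps bounded away from zero so that the softmax-to-hardmax approximation in step (3) holds uniformly over the grid. Pinning down the status of the ``separator'' tokens --- which in the actual architecture are content-dependent but for the construction must be fixed to specific positions --- is the part demanding the most care; once the connectivity is in place, the remaining estimates are routine and mirror the dense case.
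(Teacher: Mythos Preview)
Your proposal is correct and follows essentially the same three-step Yun et al.\ template as the paper --- piecewise-constant reduction, a hardmax contextual mapping built by routing information through the sparse mask, then softmax approximation --- and you correctly flag the content-dependent separator positions as the main subtlety (the paper handles this by simply positing that special tokens appear at least every $s$ positions). The only cosmetic difference is in the contextual-map routing: the paper accumulates the sequence identifier sequentially along the neighbor chain $2\to 3\to\cdots\to n$ via selective shifts and then broadcasts $\tilde z_n$ through the special tokens in $\lceil s/\ell\rceil$ additional layers, rather than collecting and broadcasting through a single hub as you sketch, but this is an implementation detail within the same framework and the same $\mathcal{T}_{\text{Sep}}^{2,1,4}$ budget.
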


\section{Native Sparse Attention and Language Modeling}
SepLLM essentially models sparsity based on the natural semantic distribution of natural language. During the pre-training phase, SepLLM intentionally compresses segment information into the separator used to divide the segment. This approach closely aligns with the semantic distribution of natural language because the separator itself provides a division and summary of the current segment. The segments separated out are inherently semantically coherent, forming self-contained semantic units. Therefore, \textbf{we can view SepLLM as a native sparse attention mechanism inherent to natural language itself}. Moreover, SepLLM is suitable to replace StreamingLLM (which only statically focuses on important positions and is overly sparse) as the fundamental baseline model for sparse attention mechanisms in LLMs.

\section{Concluding Remarks}
\label{sec:conclusion}

In this paper, we focus on efficient neural architecture modification to address the computational and storage challenges in LLMs, especially when processing long inputs. From the visualization of attention maps, we find that certain separator tokens consistently contribute high attention scores. Inspired by this, we propose SepLLM, a new language modeling perspective and a sparse attention mechanism,  focusing attention computation on Initial, Neighboring, and Separator Tokens. To achieve wall-clock time acceleration, we also implement hardware-efficient kernels. Our training-free studies suggest these separators effectively compress segment information, enabling efficient information retrieval. Unlike previous training-free methods, SepLLM can be incorporated into the training phase, \eg, training-from-scratch or post-training, thus reducing disparities between training and inference. Extensive experiments across various settings have demonstrated SepLLM's practical effectiveness.

\section*{Impact Statement}
This paper presents work whose goal is to advance the field of Machine Learning. There are many potential societal consequences of our work, none which we feel must be specifically highlighted here.

\nocite{langley00}

\bibliography{paper_citations}

\begin{thebibliography}{44}
\providecommand{\natexlab}[1]{#1}
\providecommand{\url}[1]{\texttt{#1}}
\expandafter\ifx\csname urlstyle\endcsname\relax
  \providecommand{\doi}[1]{doi: #1}\else
  \providecommand{\doi}{doi: \begingroup \urlstyle{rm}\Url}\fi

\bibitem[Almazrouei et~al.(2023)Almazrouei, Alobeidli, Alshamsi, Cappelli, Cojocaru, Debbah, Goffinet, Hesslow, Launay, Malartic, Mazzotta, Noune, Pannier, and Penedo]{falcon40b}
Almazrouei, E., Alobeidli, H., Alshamsi, A., Cappelli, A., Cojocaru, R., Debbah, M., Goffinet, {\'E}., Hesslow, D., Launay, J., Malartic, Q., Mazzotta, D., Noune, B., Pannier, B., and Penedo, G.
\newblock {The Falcon Series of Open Language Models}.
\newblock Preprint arXiv:2311.16867, 2023.

\bibitem[Beltagy et~al.(2020)Beltagy, Peters, and Cohan]{beltagy2020longformer}
Beltagy, I., Peters, M., and Cohan, A.
\newblock {Longformer: The Long-Document Transformer}.
\newblock Preprint arXiv:2004.05150, 2020.

\bibitem[Biderman et~al.(2023)Biderman, Schoelkopf, Anthony, Bradley, O’Brien, Hallahan, Khan, Purohit, Prashanth, Raff, Skowron, Sutawika, and Wal]{biderman2023pythia}
Biderman, S., Schoelkopf, H., Anthony, Q., Bradley, H., O’Brien, K., Hallahan, E., Khan, M., Purohit, S., Prashanth, U., Raff, E., Skowron, A., Sutawika, L., and Wal, O.
\newblock {Pythia: A Suite for Analyzing Large Language Models Across Training and Scaling}.
\newblock In \emph{International Conference on Machine Learning}, 2023.

\bibitem[Bisk et~al.(2020)Bisk, Zellers, Bras, Gao, and Choi]{bisk2020piqa}
Bisk, Y., Zellers, R., Bras, R., Gao, J., and Choi, Y.
\newblock {PIQA: Reasoning about Physical Commonsense in Natural Language}.
\newblock In \emph{AAAI conference on artificial intelligence}, 2020.

\bibitem[Brown et~al.(2020)Brown, Mann, Ryder, Subbiah, Kaplan, Dhariwal, Neelakantan, Shyam, Sastry, Askell, Agarwal, Herbert-Voss, Krueger, Henighan, Child, Ramesh, Ziegler, Wu, Winter, Hesse, Chen, Sigler, Litwin, Gray, Chess, Clark, Berner, McCandlish, Radford, Sutskever, and Amodei]{brown2020language}
Brown, T., Mann, B., Ryder, N., Subbiah, M., Kaplan, J., Dhariwal, P., Neelakantan, A., Shyam, P., Sastry, G., Askell, A., Agarwal, S., Herbert-Voss, A., Krueger, G., Henighan, T., Child, R., Ramesh, A., Ziegler, D., Wu, J., Winter, C., Hesse, C., Chen, M., Sigler, E., Litwin, M., Gray, S., Chess, B., Clark, J., Berner, C., McCandlish, S., Radford, A., Sutskever, I., and Amodei, D.
\newblock {Language Models are Few-Shot Learners}.
\newblock In \emph{Neural Information Processing Systems}, 2020.

\bibitem[Chen et~al.(2025{\natexlab{a}})Chen, Xia, and Huang]{10.1145/3701551.3703536}
Chen, G., Xia, L., and Huang, C.
\newblock Lightgnn: Simple graph neural network for recommendation.
\newblock In \emph{Proceedings of the Eighteenth ACM International Conference on Web Search and Data Mining}, WSDM '25, 2025{\natexlab{a}}.
\newblock \doi{10.1145/3701551.3703536}.

\bibitem[Chen et~al.(2025{\natexlab{b}})Chen, Xia, and Huang]{chen2025pretraining}
Chen, G., Xia, L., and Huang, C.
\newblock Pre-training for recommendation unlearning.
\newblock In \emph{Proceedings of the 48th International ACM SIGIR Conference on Research and Development in Information Retrieval}, SIGIR '25, 2025{\natexlab{b}}.
\newblock \doi{10.1145/3726302.3730060}.

\bibitem[Chen et~al.(2024)Chen, Qian, Tang, Lai, Liu, Han, and Jia]{chenlonglora}
Chen, Y., Qian, S., Tang, H., Lai, X., Liu, Z., Han, S., and Jia, J.
\newblock {LongLoRA: Efficient Fine-tuning of Long-Context Large Language Models}.
\newblock In \emph{International Conference on Learning Representations}, 2024.

\bibitem[Clark et~al.(2018)Clark, Cowhey, Etzioni, Khot, Sabharwal, Schoenick, and Tafjord]{clark2018think}
Clark, P., Cowhey, I., Etzioni, O., Khot, T., Sabharwal, A., Schoenick, C., and Tafjord, O.
\newblock {Think you have Solved Question Answering? Try ARC, the AI2 Reasoning Challenge}.
\newblock Preprint arXiv:1803.05457, 2018.

\bibitem[Cobbe et~al.(2021)Cobbe, Kosaraju, Bavarian, Chen, Jun, Kaiser, Plappert, Tworek, Hilton, Nakano, et~al.]{cobbe2021training}
Cobbe, K., Kosaraju, V., Bavarian, M., Chen, M., Jun, H., Kaiser, L., Plappert, M., Tworek, J., Hilton, J., Nakano, R., et~al.
\newblock {Training Verifiers to Solve Math Word Problems}.
\newblock Preprint arXiv:2110.14168, 2021.

\bibitem[Devlin et~al.(2019)Devlin, Chang, Lee, and Toutanova]{devlin2018bert}
Devlin, J., Chang, M., Lee, K., and Toutanova, K.
\newblock {BERT: Pre-training of Deep Bidirectional Transformers for Language Understanding}.
\newblock In \emph{North American Chapter of the Association for Computational Linguistics}, 2019.

\bibitem[Dosovitskiy et~al.(2020)Dosovitskiy, Beyer, Kolesnikov, Weissenborn, Zhai, Unterthiner, Dehghani, Minderer, Heigold, Gelly, Uszkoreit, and Houlsby]{dosovitskiy2020image}
Dosovitskiy, A., Beyer, L., Kolesnikov, A., Weissenborn, D., Zhai, X., Unterthiner, T., Dehghani, M., Minderer, M., Heigold, G., Gelly, S., Uszkoreit, J., and Houlsby, N.
\newblock {An Image is Worth 16x16 Words: Transformers for Image Recognition at Scale}.
\newblock In \emph{International Conference on Learning Representations}, 2020.

\bibitem[Dubey et~al.(2024)Dubey, Jauhri, Pandey, Kadian, Al-Dahle, Letman, Mathur, Schelten, Yang, Fan, et~al.]{dubey2024llama}
Dubey, A., Jauhri, A., Pandey, A., Kadian, A., Al-Dahle, A., Letman, A., Mathur, A., Schelten, A., Yang, A., Fan, A., et~al.
\newblock {The Llama 3 Herd of Models}.
\newblock Preprint arXiv:2407.21783, 2024.

\bibitem[Gao et~al.(2020)Gao, Biderman, Black, Golding, Hoppe, Foster, Phang, He, Thite, Nabeshima, et~al.]{gao2020pile}
Gao, L., Biderman, S., Black, S., Golding, L., Hoppe, T., Foster, C., Phang, J., He, H., Thite, A., Nabeshima, N., et~al.
\newblock {The Pile: An 800GB Dataset of Diverse Text for Language Modeling}.
\newblock Preprint arXiv:2101.00027, 2020.

\bibitem[Ge et~al.(2024)Ge, Zhang, Liu, Zhang, Han, and Gao]{ge2023model}
Ge, S., Zhang, Y., Liu, L., Zhang, M., Han, J., and Gao, J.
\newblock {Model Tells You What to Discard: Adaptive KV Cache Compression for LLMs}.
\newblock In \emph{International Conference on Learning Representations}, 2024.

\bibitem[Geneva \& Zabaras(2022)Geneva and Zabaras]{geneva2022transformers}
Geneva, N. and Zabaras, N.
\newblock {Transformers for Modeling Physical Systems}.
\newblock \emph{Neural Networks}, 2022.

\bibitem[He et~al.(2024)He, Feng, Luo, Yang, Wang, Xu, Zhang, Yang, and He]{hetwo}
He, Z., Feng, G., Luo, S., Yang, K., Wang, L., Xu, J., Zhang, Z., Yang, H., and He, D.
\newblock {Two Stones Hit One Bird: Bilevel Positional Encoding for Better Length Extrapolation}.
\newblock In \emph{International Conference on Machine Learning}, 2024.

\bibitem[Hendrycks et~al.(2021)Hendrycks, Burns, Basart, Zou, Mazeika, Song, and Steinhardt]{hendrycks2021measuring}
Hendrycks, D., Burns, C., Basart, S., Zou, A., Mazeika, M., Song, D., and Steinhardt, J.
\newblock {Measuring Massive Multitask Language Understanding}.
\newblock In \emph{International Conference on Learning Representations}, 2021.

\bibitem[Katharopoulos et~al.(2020)Katharopoulos, Vyas, Pappas, and Fleuret]{katharopoulos2020transformers}
Katharopoulos, A., Vyas, A., Pappas, N., and Fleuret, F.
\newblock {Transformers are RNNs: Fast Autoregressive Transformers with Linear Attention}.
\newblock In \emph{International Conference on Machine Learning}, 2020.

\bibitem[Li et~al.(2025)Li, Shi, Wu, Zheng, Li, Jiang, Xu, and Jia]{li2024quickllama}
Li, J., Shi, H., Wu, S., Zheng, C., Li, Z., Jiang, X., Xu, H., and Jia, J.
\newblock {QuickLLaMA: Query-Aware Inference Acceleration for Large Language Models}.
\newblock In \emph{Proceedings of the 31st International Conference on Computational Linguistics}, 2025.

\bibitem[Li et~al.(2024)Li, Huang, Yang, Venkitesh, Locatelli, Ye, Cai, Lewis, and Chen]{li2024snapkv}
Li, Y., Huang, Y., Yang, B., Venkitesh, B., Locatelli, A., Ye, H., Cai, T., Lewis, P., and Chen, D.
\newblock {SnapKV: LLM Knows What You Are Looking for Before Generation}.
\newblock In \emph{Advances in Neural Information Processing Systems}, 2024.

\bibitem[Liu et~al.(2021)Liu, Cui, Liu, Huang, Wang, and Zhang]{liu2020logiqachallengedatasetmachine}
Liu, J., Cui, L., Liu, H., Huang, D., Wang, Y., and Zhang, Y.
\newblock {LogiQA: A Challenge Dataset for Machine Reading Comprehension with Logical Reasoning}.
\newblock In \emph{International Joint Conferences on Artificial Intelligence}, 2021.

\bibitem[Merity et~al.(2017)Merity, Xiong, Bradbury, and Socher]{merity2016pointer}
Merity, S., Xiong, C., Bradbury, J., and Socher, R.
\newblock {Pointer Sentinel Mixture Models}.
\newblock In \emph{International Conference on Learning Representations}, 2017.

\bibitem[Paperno et~al.(2016)Paperno, Kruszewski, Lazaridou, Pham, Bernardi, Pezzelle, Baroni, Boleda, and Fern{\'a}ndez]{paperno-etal-2016-lambada}
Paperno, D., Kruszewski, G., Lazaridou, A., Pham, N., Bernardi, R., Pezzelle, S., Baroni, M., Boleda, G., and Fern{\'a}ndez, R.
\newblock {The LAMBADA dataset: Word prediction requiring a broad discourse context }.
\newblock In \emph{Association for Computational Linguistics}, 2016.

\bibitem[PyTorch(2024)]{flexattn}
PyTorch.
\newblock Flexattention: The flexibility of pytorch with the performance of flashattention, 2024.
\newblock URL \url{https://pytorch.org/blog/flexattention/}.

\bibitem[Rae et~al.(2020)Rae, Potapenko, Jayakumar, Hillier, and Lillicrap]{raecompressive2019}
Rae, J., Potapenko, A., Jayakumar, S., Hillier, C., and Lillicrap, T.
\newblock {Compressive Transformers for Long-Range Sequence Modelling}.
\newblock In \emph{International Conference on Learning Representations}, 2020.

\bibitem[Raffel et~al.(2020)Raffel, Shazeer, Roberts, Lee, Narang, Matena, Zhou, Li, and Liu]{raffel2020exploring}
Raffel, C., Shazeer, N., Roberts, A., Lee, K., Narang, S., Matena, M., Zhou, Y., Li, W., and Liu, P.
\newblock {Exploring the Limits of Transfer Learning with a Unified Text-to-Text Transformer}.
\newblock \emph{Journal of Machine Learning Research}, 2020.

\bibitem[Schlag et~al.(2021)Schlag, Irie, and Schmidhuber]{schlag2021linear}
Schlag, I., Irie, K., and Schmidhuber, J.
\newblock Linear transformers are secretly fast weight programmers.
\newblock In \emph{International Conference on Machine Learning}, pp.\  9355--9366. PMLR, 2021.

\bibitem[Shi et~al.(2021)Shi, Gao, Ren, Xu, Liang, Li, and Kwok]{shi2021sparsebert}
Shi, H., Gao, J., Ren, X., Xu, H., Liang, X., Li, Z., and Kwok, J.
\newblock {SparseBERT: Rethinking the Importance Analysis in Self-attention}.
\newblock In \emph{International Conference on Machine Learning}, 2021.

\bibitem[Su et~al.(2024)Su, Ahmed, Lu, Pan, Bo, and Liu]{su2024roformer}
Su, J., Ahmed, M., Lu, Y., Pan, S., Bo, W., and Liu, Y.
\newblock Roformer: Enhanced transformer with rotary position embedding.
\newblock \emph{Neurocomputing}, 568:\penalty0 127063, 2024.

\bibitem[Vaswani et~al.(2017)Vaswani, Shazeer, Parmar, Uszkoreit, Jones, Gomez, Kaiser, and Polosukhin]{NIPS2017_transformer}
Vaswani, A., Shazeer, N., Parmar, N., Uszkoreit, J., Jones, L., Gomez, A., Kaiser, L., and Polosukhin, I.
\newblock {Attention is All you Need}.
\newblock In \emph{Neural Information Processing Systems}, 2017.

\bibitem[Wei et~al.(2022)Wei, Wang, Schuurmans, Bosma, Xia, Chi, Le, Zhou, et~al.]{wei2022chain}
Wei, J., Wang, X., Schuurmans, D., Bosma, M., Xia, F., Chi, E., Le, Q., Zhou, D., et~al.
\newblock {Chain-of-Thought Prompting Elicits Reasoning in Large Language Models }.
\newblock In \emph{Neural Information Processing Systems}, 2022.

\bibitem[Welbl et~al.(2017)Welbl, Liu, and Gardner]{welbl2017crowdsourcing}
Welbl, J., Liu, N., and Gardner, M.
\newblock {Crowdsourcing Multiple Choice Science Questions}.
\newblock In \emph{Workshop on Noisy User-generated Text}, 2017.

\bibitem[Xiao et~al.(2024{\natexlab{a}})Xiao, Zhang, Han, Xiao, Lin, Zhang, Liu, and Sun]{xiao2024infllm}
Xiao, C., Zhang, P., Han, X., Xiao, G., Lin, Y., Zhang, Z., Liu, Z., and Sun, M.
\newblock {InfLLM: Training-Free Long-Context Extrapolation for LLMs with an Efficient Context Memory}.
\newblock In \emph{Neural Information Processing Systems}, 2024{\natexlab{a}}.

\bibitem[Xiao et~al.(2024{\natexlab{b}})Xiao, Tian, Chen, Han, and Lewis]{xiao2024efficient}
Xiao, G., Tian, Y., Chen, B., Han, S., and Lewis, M.
\newblock {Efficient Streaming Language Models with Attention Sinks}.
\newblock In \emph{International Conference on Learning Representations}, 2024{\natexlab{b}}.

\bibitem[Yang et~al.(2024)Yang, Han, Gao, Hu, Zhang, and Zhao]{yang2024pyramidinfer}
Yang, D., Han, X., Gao, Y., Hu, Y., Zhang, S., and Zhao, H.
\newblock {PyramidInfer: Pyramid KV Cache Compression for High-throughput LLM Inference}.
\newblock Preprint arXiv:2405.12532, 2024.

\bibitem[Yuan et~al.(2025)Yuan, Gao, Dai, Luo, Zhao, Zhang, Xie, Wei, Wang, Xiao, Wang, Ruan, Zhang, Liang, and Zeng]{yuan2025nativesparseattentionhardwarealigned}
Yuan, J., Gao, H., Dai, D., Luo, J., Zhao, L., Zhang, Z., Xie, Z., Wei, Y.~X., Wang, L., Xiao, Z., Wang, Y., Ruan, C., Zhang, M., Liang, W., and Zeng, W.
\newblock {Native Sparse Attention: Hardware-Aligned and Natively Trainable Sparse Attention}.
\newblock Preprint arXiv:2502.11089, 2025.

\bibitem[Yun et~al.(2020)Yun, Chang, Bhojanapalli, Rawat, Reddi, and Kumar]{yun2020n}
Yun, C., Chang, Y.-W., Bhojanapalli, S., Rawat, A.~S., Reddi, S., and Kumar, S.
\newblock O(n) connections are expressive enough: Universal approximability of sparse transformers.
\newblock \emph{Advances in Neural Information Processing Systems}, 33:\penalty0 13783--13794, 2020.

\bibitem[Zaheer et~al.(2020)Zaheer, Guruganesh, Dubey, Ainslie, Alberti, Ontanon, Pham, Ravula, Wang, Yang, and Ahmed]{zaheer2020big}
Zaheer, M., Guruganesh, G., Dubey, K., Ainslie, J., Alberti, C., Ontanon, S., Pham, P., Ravula, A., Wang, Q., Yang, L., and Ahmed, A.
\newblock {Big Bird: Transformers for Longer Sequences}.
\newblock In \emph{Neural Information Processing Systems}, 2020.

\bibitem[Zhang et~al.(2020)Zhang, Zhao, Saleh, and Liu]{zhang2020pegasus}
Zhang, J., Zhao, Y., Saleh, M., and Liu, P.
\newblock {PEGASUS: Pre-training with Extracted Gap-sentences for Abstractive Summarization}.
\newblock In \emph{International Conference on Machine Learning}, 2020.

\bibitem[Zhang et~al.(2024)Zhang, Gao, Liu, Lu, Xiong, Dong, Chang, Hu, Xiao, et~al.]{zhang2024pyramidkv}
Zhang, Y., Gao, B., Liu, T., Lu, K., Xiong, W., Dong, Y., Chang, B., Hu, J., Xiao, W., et~al.
\newblock {PyramidKV: Dynamic KV Cache Compression based on Pyramidal Information Funneling}.
\newblock Preprint arXiv:2406.02069, 2024.

\bibitem[Zhang et~al.(2023)Zhang, Sheng, Zhou, Chen, Zheng, Cai, Song, Tian, R{\'e}, Barrett, et~al.]{zhang2024h2o}
Zhang, Z., Sheng, Y., Zhou, T., Chen, T., Zheng, L., Cai, R., Song, Z., Tian, Y., R{\'e}, C., Barrett, C., et~al.
\newblock {H$_2$O: Heavy-Hitter Oracle for Efficient Generative Inference of Large Language Models}.
\newblock In \emph{Neural Information Processing Systems}, 2023.

\bibitem[Zheng et~al.(2025)Zheng, Gao, Chen, Shi, Xiong, Ren, Huang, Jiang, Li, and Li]{zheng2025selfadjust}
Zheng, C., Gao, Y., Chen, G., Shi, H., Xiong, J., Ren, X., Huang, C., Jiang, X., Li, Z., and Li, Y.
\newblock Self-adjust softmax.
\newblock Technical report, 2025.

\bibitem[Zhu et~al.(2024)Zhu, Duan, Chen, Liu, Li, Feng, Lv, Cao, Xiao, Zhang, et~al.]{zhu2024sampleattention}
Zhu, Q., Duan, J., Chen, C., Liu, S., Li, X., Feng, G., Lv, X., Cao, H., Xiao, C., Zhang, X., et~al.
\newblock {SampleAttention: Near-Lossless Acceleration of Long Context LLM Inference with Adaptive Structured Sparse Attention}.
\newblock Preprint arXiv:2406.15486, 2024.

\end{thebibliography}
\bibliographystyle{icml2025}

\appendix

\begin{center}
    \textbf{\Large Appendix}
\end{center}

\section{Visualization of Attention Scores}
\label{app:vis_attn}
We take Llama-3-8B-instruct~\cite{dubey2024llama} as the model for visualization. The input sentence is ``Natalia sold clips to 48 of her friends in April, and then she sold half as many clips in May. How many clips did Natalia sell altogether in April and May? Answer Natalia sold 48 clips in April. In May, she sold half as many clips as she did in April, so she sold 48/2=24 clips in May. Therefore, Natalia sold a total of 48+24=72 clips in April and May. The answer is $\boxed{72}$." and the visualization of different attention maps are shown in Figure~\ref{fig:attention_map1},\ref{fig:attention_map2},\ref{fig:attention_map3}.

\section{The Evolution of KV Caches}
\label{app:curve}
To explain the dynamic design for streaming setting better, we illustrate the detailed evolution of KV caches in Figure~\ref{fig:abl-kv}. As can be seen, $\textbf{\textit{n}}$ and $Size_{run}$ are both 
periodic functions after $m_0$ tokens. And the average KV cache usage is much less than the maximum capacity $\textbf{\textit{c}}$.

\begin{figure}[h]
    \centering
    \vspace{-1em}
    \includegraphics[width=\linewidth]{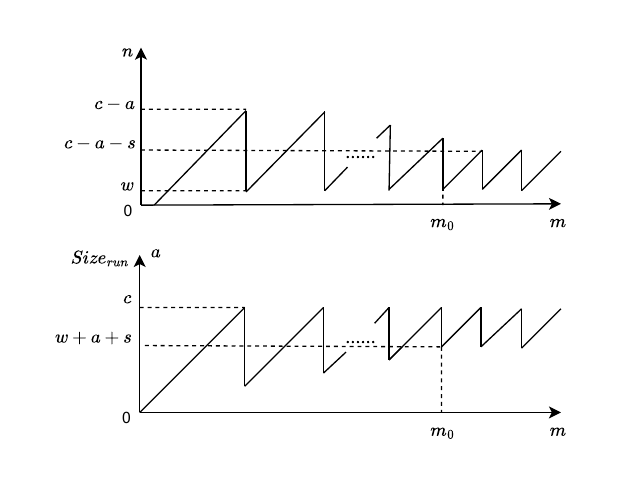}
    \vspace{-2em}
    \caption{The evolution of KV caches in the streaming setting.}
    \vspace{-1em}
    \label{fig:abl-kv}
\end{figure}

\section{Training Acceleration}
\label{app:training_acc}
We list the detailed wall-clock time per iteration and throughput in Table~\ref{tab:app_acc}. The speed-up ratio is around 1.53.
\begin{table}[h]
\resizebox{\columnwidth}{!}{
\begin{tabular}{c|ccc}
  & \makecell[c]{Vanilla \\ (Full Attention)} &  \makecell[c]{SepLLM \\ (n=64)} & \makecell[c]{SepLLM \\(n=128)}  \\ \hline
\makecell[c]{time per iteration (ms)} & { 2524.45}  & {1648.11} & {1653.11}   \\
{samples / second}   & {405.82} & {622.31}  & { 620.30} 
\end{tabular}}
\vspace{-1em}
\caption{The details about training acceleration.}
\label{tab:app_acc}
\end{table}

We can see that the speeds of \textit{SepLLM}(\textbf{\textit{n}}=128) and \textit{SepLLM}(\textit{\textbf{n}}=64) are almost the same, which is attributed to the excellent parallelization capability of \textit{Sep-Attention} module.

\section{The Performance of Different Models}
\label{app:diff}
\paragraph{Different Architectures.} Concerning different decoder-only models, we test our SepLLM on Llama3 and Pythia backbones on PG19 test dataset (generating 64K tokens). The results are shown in Table~\ref{tab:different1} (\textbf{\textit{a}}=4, \textbf{\textit{c}}=800 for both~\model~and StrmLLM. \textbf{\textit{s}}=64,\textbf{\textit{w}}=256 for~\model).

\begin{table}[h]
    \centering
    \resizebox{\columnwidth}{!}{
        \begin{tabular}{c|ccccc}
        Backbone & Arch. & \textit{\textbf{c}} & \rKV & ppl & time(s) \\ 
        \shline
        ~ & Vanilla & 64K & 32K & 1037.6 & 4160.7 \\ 
        Pythia-6.9B & StrmLLM & 800 & 800 & 15.9 & 1522.6 \\ 
        ~ & SepLLM & 800 & 562 & 15.8 & 1456.0 \\ \hline
        ~ & Vanilla & 64K & 32K & 1090.8 & 3380.6 \\ 
        Llama-3-8B & StrmLLM & 800 & 800 & 37.9 & 1096.0 \\ 
        ~ & SepLLM & 800 & 562 & 33.4 & 1049.7 \\
    \end{tabular}}
    \caption{The comparison of SepLLM adapted to different architectures.}
    \label{tab:different1}
\end{table}
From the above table, it can be seen that for models with similar size, setting a similar KV retention rate can yield similarly good performance.

\paragraph{Different Scales.} To learn the generalization to different scales, we test our SepLLM on Pythia-6.9B and Pythia-12B backbones on PG19 test dataset (generating 20K tokens). The results are illustrated in Table~\ref{tab:different2}.

\begin{table}[!ht]
    \centering
    \resizebox{\columnwidth}{!}{
    \begin{tabular}{c|ccccccc}
        Backbone & \textbf{\textit{a}} & \textbf{\textit{s}} & \textit{\textbf{w}} & \textit{\textbf{c}} & \rKV & ppl & time(s) \\ \shline
        ~ & 4 & 64 & 256 & 800 & 562 & 13.0 & 445.0 \\ 
        Pythia-6.9B & 4 & 64 & 800 & 1024 & 946 & 12.7 & 450.4 \\ 
        ~ & 4 & 64 & 928 & 1280 & 1138 & 12.7 & 454.4 \\ \hline
        Pythia-12B & 4 & 64 & 256 & 800 & 562 & 12.1 & 577.0 \\ 
    \end{tabular}}
    \caption{The comparison of SepLLM adapted to Pythia~\cite{biderman2023pythia} with different scales.}
    \label{tab:different2}
\end{table}

Compared to Pythia-12B, the smaller model Pythia-6.9B will have a higher perplexity if the entire KV cache capacity is the same (\textit{\textbf{c}}=800). Therefore, it is necessary to increase \textit{\textbf{c}} to achieve a lower perplexity close to that of Pythia-12B (\textit{\textbf{c}}=800). On the other hand, the larger models will have lower perplexity but require a longer inference time.

\paragraph{Larger Models}
Falcon-40B~\cite{falcon40b} is another larger architecture we adapted to evaluate the scalability of our proposed SepLLM. The experiment results are shown in Table~\ref{tab:falcon}, where we set \textbf{\textit{a}}=4,\textbf{\textit{s}}=64,\textbf{\textit{w}}=512/720, \textbf{\textit{c}}=800/1024 for SepLLM, and \textbf{\textit{a}}=4, \textbf{\textit{c}}=800/1024 for StreamingLLM. And the conclusions are similar to the previous parts. 

\begin{table}[h]
\begin{tabular}{c|ccccc}
Length               & Methods & \textit{\textbf{c}}    & \rKV & ppl   & time (s)    \\ \shline
\multirow{4}{*}{20K} & StrmLLM & 1024 & 1024 & 8.98  &
1512.88 \\
& StrmLLM & 800 & 800  & 9.02  & 1430.59 \\
                     & SepLLM  & 1024 & 906  & 8.92  & 1440.89 \\                 
                     & SepLLM  & 800 & 690  & 9.00  & 1368.07 \\ \hline
\multirow{4}{*}{64K} & StrmLLM & 1024 & 1024 & 11.01 &
                     4844.79 \\
                     & StrmLLM & 800 & 800  & 11.09 & 4623.90 \\
                     & SepLLM  & 1024 & 906  & 10.96 & 4619.63 \\                     
                     & SepLLM  & 800 & 690  & 11.07 & 4414.72
\end{tabular}
\caption{The comparison of SepLLM adapted to Falcon-40B~\cite{falcon40b}.}
\label{tab:falcon}
\end{table}

\paragraph{Base or Instruct.} In general, whether it is the base model or the instruction-tuned model, we can condense the segment information into the corresponding Key-Value pairs of the separator tokens. To illustrate, we fine-tune Llama-3-8B-instruct and Llama-3-8B-base models~\cite{dubey2024llama} on LongAlpaca dataset~\cite{chenlonglora} for only 200 and 500 steps, respectively. After fine-tuning, we take GSM8K-CoT~\cite{cobbe2021training} as the benchmark for reasoning ability evaluation. We find that both base and instruction-tuned models exhibit excellent performance (matching or surpassing the vanilla model with original attention mechanism). The only difference is that for Llama-3-8B-base, we need to fine-tune for more steps to achieve such performance. In comparison, Llama-3-8B-instruct requires fewer fine-tuning steps. Even in a training-free scenario, Llama-3-8B-instruct demonstrates decent performance. This indicates that the embeddings of Llama-3-8B-instruct align with the distribution required by the SepLLM architecture better.

\begin{table}[!ht]
    \centering
    \begin{tabular}{l|lcc}
        Backbone & Algorithm & GSM8K-CoT & \rKV~(\%) \\ \shline
        Base & Vanilla  & 54.44 & 100 \\
        ~ & SepLLM ft. & 55.95 & 47.36 \\ \hline
        Instruct & Vanilla & 77.26 & 100 \\ 
        ~ & SepLLM ft. & 77.63 & 47.36 \\ 
    \end{tabular}
    \caption{The comparison of SepLLM adapted to Llama-3-8B~\cite{dubey2024llama} of base or instruct versions.}
    \label{tab:rl}
\end{table}

\section{Extended Comparisons}
\label{app:more_cmp}
We use GSM8K-CoT~\cite{cobbe2021training}, the most commonly used metric for testing mathematical reasoning and logical analysis, to compare SepLLM (\pa=3, \pn=256) with other state-of-the-art training-free methods, including H$_2$O~\citep{zhang2024h2o}, SnapKV~\citep{li2024snapkv}, and PyramidKV \citep{zhang2024pyramidkv}. All methods are configured to retain nearly identical runtime KV cache usage. The results are shown in Table~\ref{tab:more_cmp_res}. It can be observed that SepLLM, without requiring complex importance evaluation mechanisms, is able to maintain strong reasoning capabilities simply by compressing segment information.

\begin{table}[t]
\centering
\small
\setlength{\tabcolsep}{6pt}
\setlength\extrarowheight{2.5pt}
\begin{tabular}{c|ccc}
        & flexible-extract & strict-match & \rKV~(\%) \\ \shline
Vanilla & 77.79    & 77.26  & 100.00  \\
SepLLM  & 77.18    & 77.18  & 47.36   \\
H2O     & 76.27    & 75.06  & 47.54 \\
SnapKV  & 76.50    & 73.62  & 47.54 \\
PyramidKV &75.82   &72.02   &47.54
\end{tabular}
\caption{Evaluation results and average \textit{runtime} KV cache usage for experiments on GSM8K-CoT with 8-shots, compared to multiple baseline methods.}
\label{tab:more_cmp_res}
\end{table}

\section{Needle in a Haystack}
\label{sec:append_needle}
To evaluate the long-context information retrieval ability of our proposed SepLLM, we take \textit{Needle in a Haystack}\footnote{https://github.com/gkamradt/LLMTest\_NeedleInAHaystack} as the benchmark and compare the performance of SepLLM and StreamingLLM. The results are shown in Figure~\ref{fig:shortNeedle1_stream},\ref{fig:shortNeedle1_seg1},\ref{fig:shortNeedle2_seg2},\ref{fig:shortNeedle2_seg3} and SepLLM can achieve more scores compared to StreamingLLM. This experiment indeed validates that SepLLM can effectively compress information of segments into the KV corresponding to separators, as even though the KV corresponding to tokens in the needle (except for possibly existing separators) are discarded, SepLLM can still retrieve the needle.

\begin{figure*}[h]
    \includegraphics[width=1.15\linewidth]{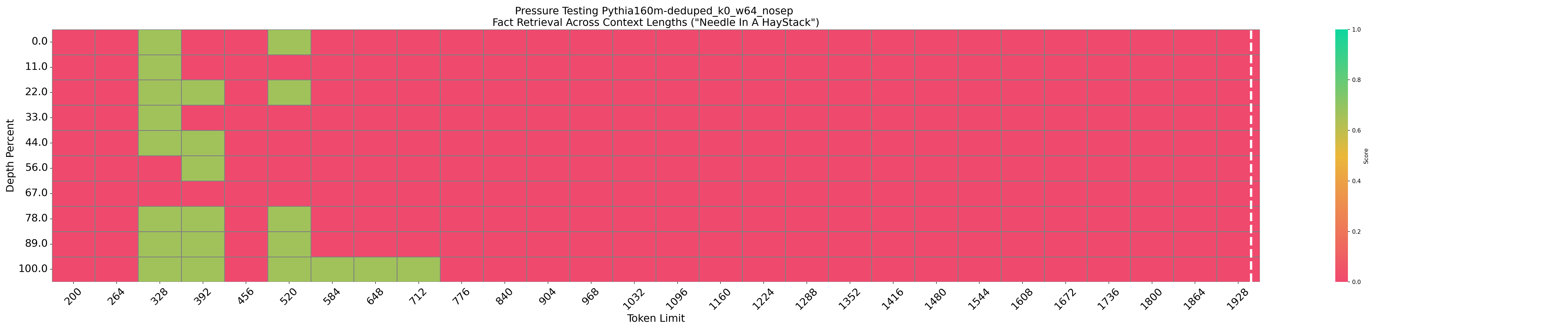}
    \centering
    \caption{\textit{Needle-in-a-Haystack} test results for StreamingLLM (\textbf{\textit{n}=64}) based on Pythia-160M-deduped. }
    \label{fig:shortNeedle1_stream}
    \includegraphics[width=1.15\linewidth]{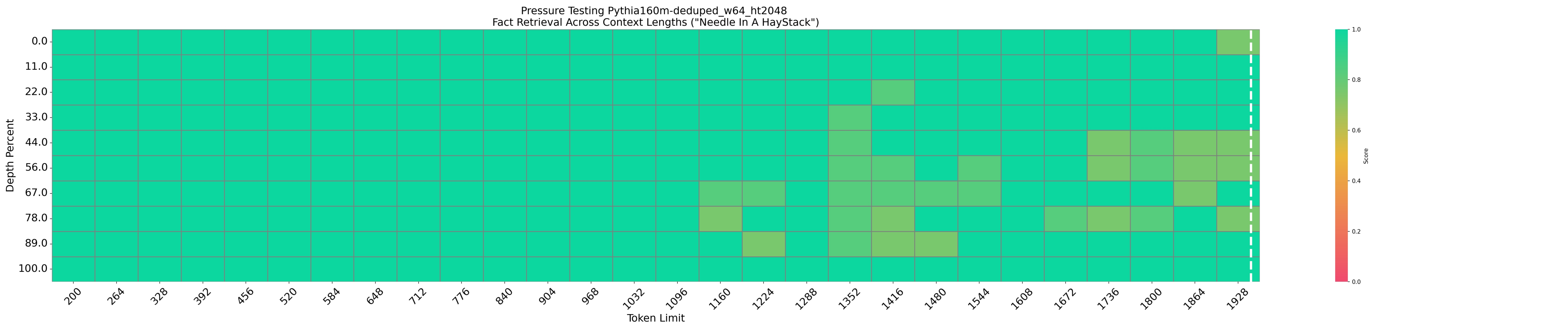}
    \centering
    \caption{\textit{Needle-in-a-Haystack} test results for our~\model (\textbf{\textit{n}=64}, H/T) based on Pythia-160M-deduped. }
    \label{fig:shortNeedle1_seg1}
    \includegraphics[width=1.15\linewidth]{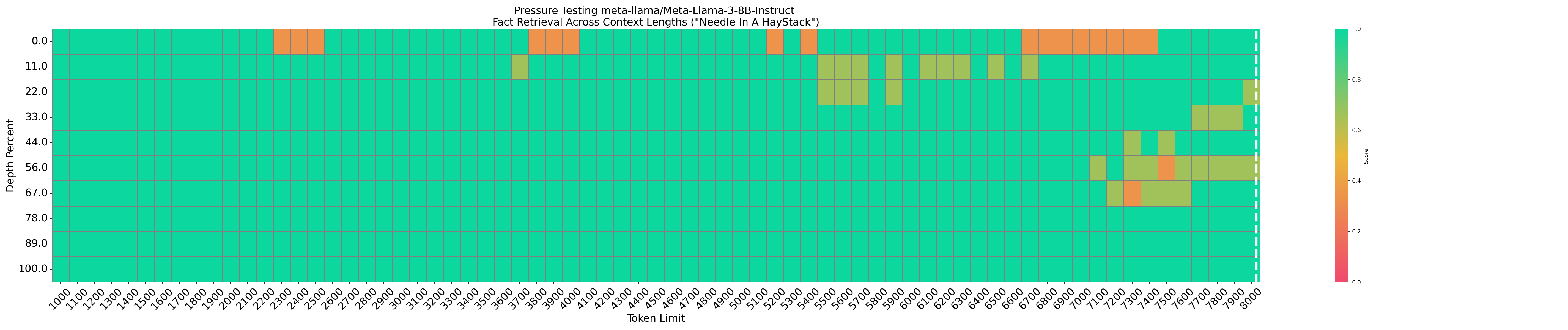}
    \centering
    \caption{\textit{Needle-in-a-Haystack} test results for our~\model(\textbf{\textit{n}=2048};  first/last 2 layers (4 layers in total): full attention) based on Llama-3-8B-instruct. 4 initial tokens are kept.}
    \label{fig:shortNeedle2_seg2}
    \includegraphics[width=1.15\linewidth]{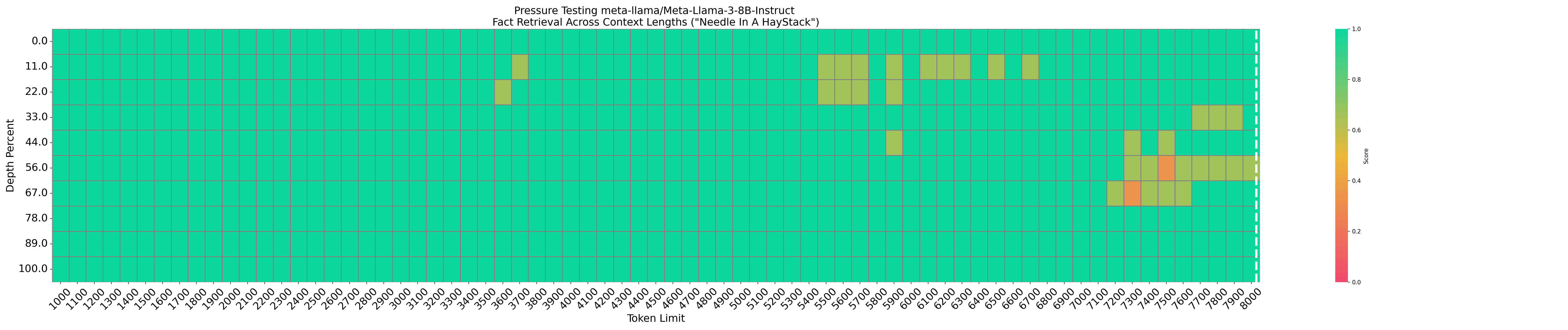}
    \centering
    \caption{\textit{Needle-in-a-Haystack} test results for our~\model (\textbf{\textit{n}=2048}; first/last 2 layers (4 layers in total): full attention) based on Llama-3-8B-instruct. 32 initial tokens are kept.}
    \label{fig:shortNeedle2_seg3}
\end{figure*}

\begin{figure*}[h]
    \includegraphics[width=1.05\linewidth]{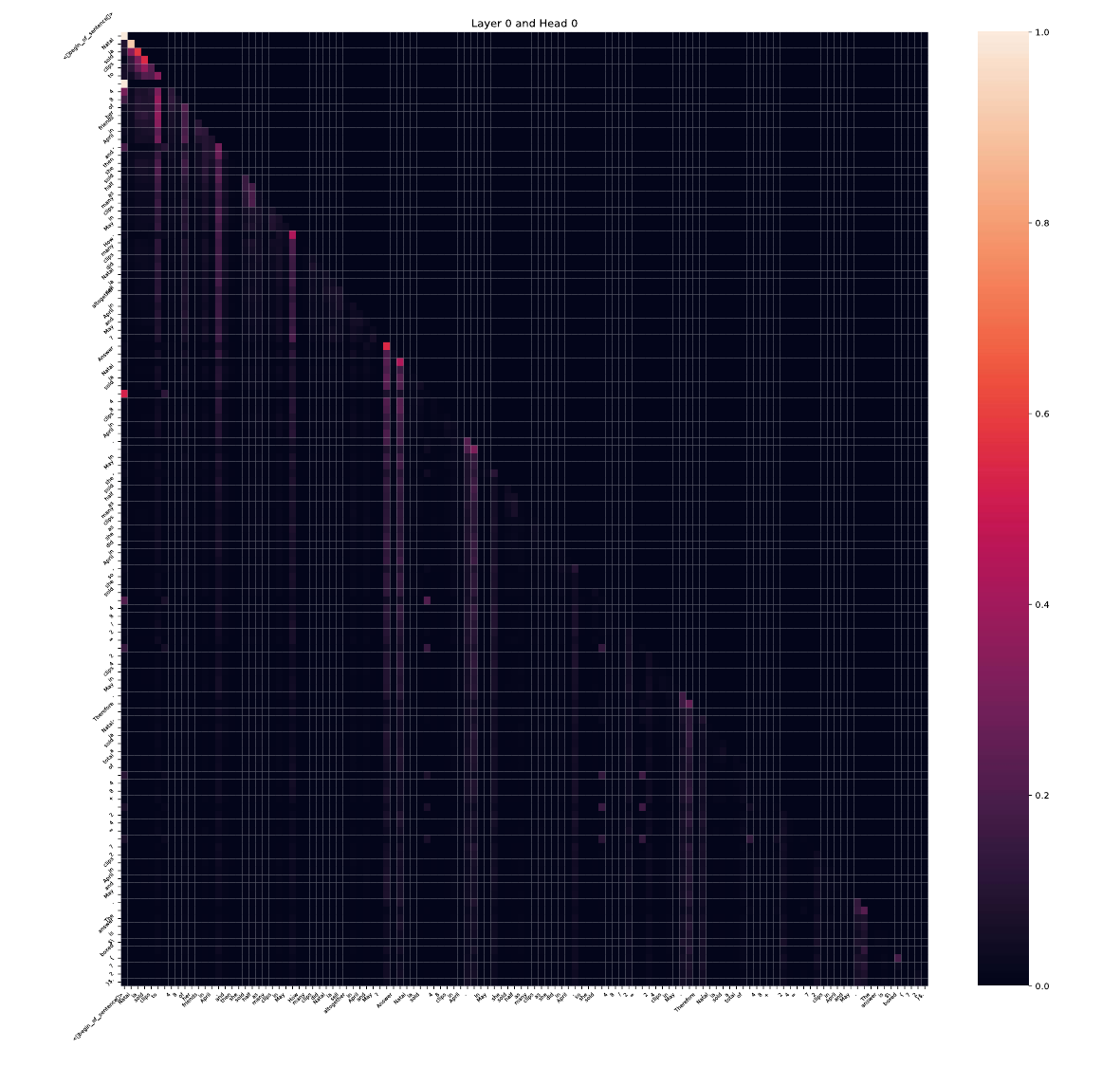}
    \centering
    \caption{An example of attention map in 
Llama-3-8B-Instruct (Layer 0 and Head 0).}
    \label{fig:attention_map1}
\end{figure*}

\begin{figure*}[h]
    \includegraphics[width=1.05\linewidth]{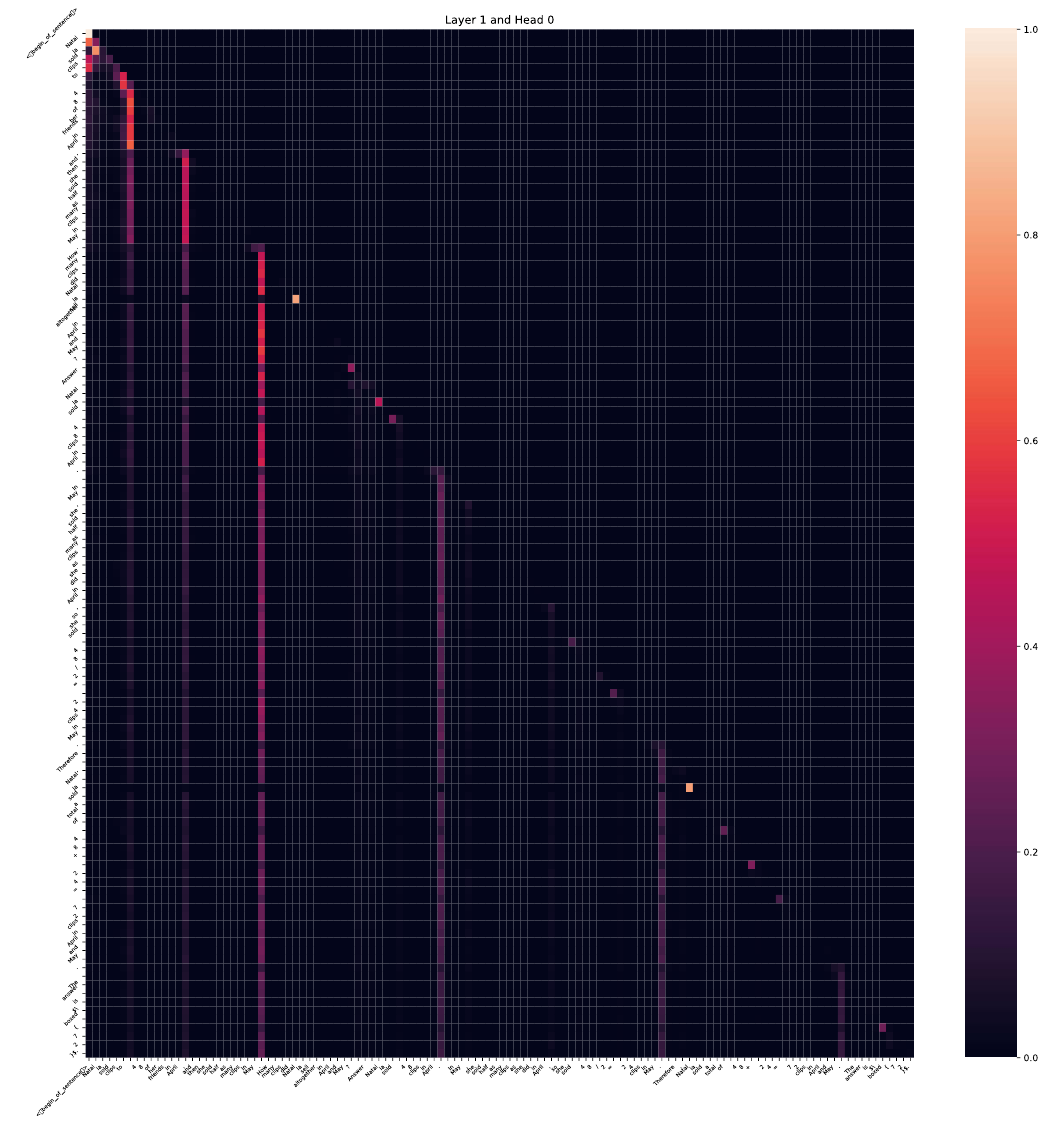}
    \centering
    \caption{An example of attention map in 
Llama-3-8B-Instruct (Layer 1 and Head 0).}
    \label{fig:attention_map2}
\end{figure*}

\begin{figure*}[h]
    \includegraphics[width=1.05\linewidth]{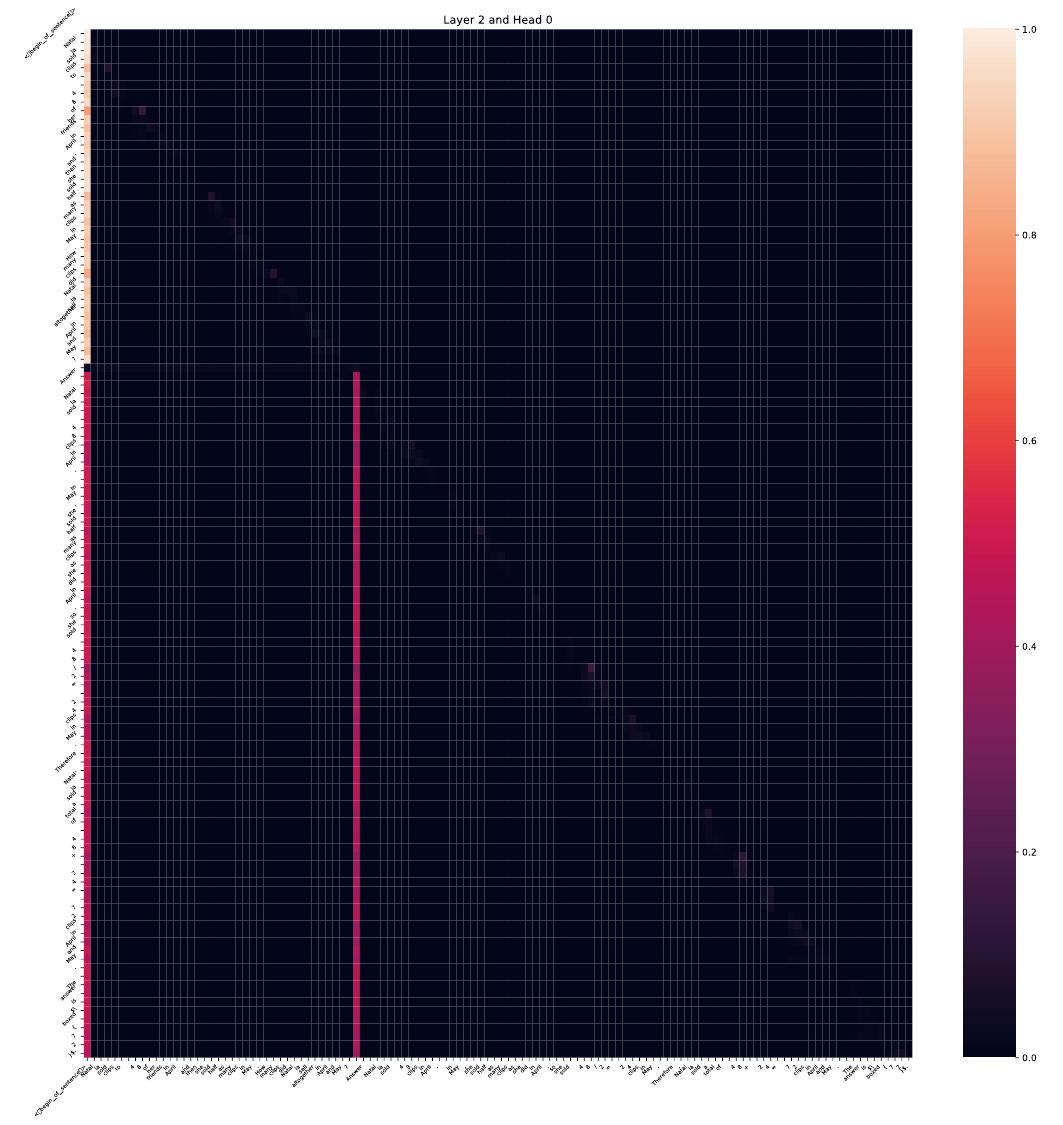}
    \centering
    \caption{An example of attention map in 
Llama-3-8B-Instruct (Layer 2 and Head 0).}
    \label{fig:attention_map3}
\end{figure*}

\section{Choice of Separators}
\label{app:sep_choice}
In this paper, we treat the choice of separators as a hyperparameter. Since the number of commonly used separators is typically limited in most languages, we recommend including all types of separators as much as possible. In Table~\ref{tab:sep_choices}, we present the experimental results based on different choices of separators. We use the GSM8K-CoT metric to reflect the performance differences caused by these choices (GSM8K-CoT is a commonly used metric for evaluating the mathematical reasoning and logical analysis capabilities of LLMs). It can be observed that when using four common separators instead of the default nine, the performance shows a certain degree of decline. Furthermore, when only ``." and ``?" are used, the model's reasoning ability drops significantly. When all separators are removed, this performance degradation becomes even more pronounced (\ie, \textit{StrmLLM(\pn=256)}). While increasing the number of Neighboring Tokens (\pn) to ensure the total amount of retained KV cache remains unchanged can partially mitigate the performance loss (\ie, \textit{StrmLLM(\pn=380)}), it falls far short of restoring the performance to the levels achieved with the default nine separators or the four common separators.

\section{Discussions on Separators}
\label{app:dis}
We provide the following assumptions and discussions on why keeping the KV corresponding to separators can maintain the performance of the original model. 
\vspace{-0.1in}
\paragraph{Training from Scratch} During our training process, we enforce that each current token can only see its preceding neighboring tokens, separator tokens, and initial tokens such that the model is compelled to condense the information of each segment into the Key-Value pairs corresponding to the separators through the self-attention mechanism. Therefore, the hidden embedding of a separator is functionally similar to the state space of an RNN, even though the computation method differs as we utilize the attention mechanism. Furthermore, since the length of each segment is typically finite, short and balanced~\cite{hetwo}, the compressed information is efficient and less likely to be forgotten.
\vspace{-0.1in}
\paragraph{Training-Free} First, these separators (commas, periods) are extremely high-frequency tokens (both in the pre-training text and in the text generated by the language model). Thus, during the pre-training process, these separators are the most common context for all other tokens in the vocabulary. Consequently, their embeddings exhibit greater similarity, resulting in larger attention values when multiplied with other tokens.
Furthermore, from a logical standpoint, it is evident that separators need to be generated by the language model very frequently. Therefore, their attention values with respect to any other token cannot be too small; otherwise, they would not be generated frequently by the language model. Tokens with larger mutual attention values will incorporate more information from the other tokens. Hence, from a semantic perspective, generating a separator serves as a summarization of the current segment, naturally dividing and summarizing the contextual semantics.

\begin{table*}[t]
\centering
\small
\setlength{\tabcolsep}{0.8mm}
\setlength\extrarowheight{2.5pt}
\begin{tabular}{c|ccccc}
           & SepLLM (n=256)                                                             & SepLLM (\pn=256)      & SepLLM (\pn=256) & StrmLLM (\pn=256) & StrmLLM (\pn=380) \\ \shline \rowcolor{Gray}
Separators & ``." ``," ``?" ``!" ``;" ``:" `` " ``\textbackslash t" ``\textbackslash n" & ``," ``." ``?" ``;" & ``." ``?"      & None            & None            \\  
\rKV~(\%)         & 47.36 & 37.92 & 36.44 & 31.92 & 47.54 \\
flexible-extract & 77.18 & 76.68 & 70.66 & 68.84 & 71.42 \\
strict-match     & 77.18 & 76.85 & 70.36 & 67.63 & 70.89
\end{tabular}
\vspace{-0.03in}
\caption{Evaluation results and average \textit{runtime} KV cache usage for training-free experiments on GSM8K-CoT with 8-shots, based on different choices of separators.}
\label{tab:sep_choices}
\end{table*}

\section{Fixed-Interval Variant}
\label{sec:appendix_fix_int}
Sparsity is ubiquitous across various types of neural networks~\cite{shi2021sparsebert,10.1145/3701551.3703536,zheng2025selfadjust,yuan2025nativesparseattentionhardwarealigned,chen2025pretraining}. To verify the sparsity caused by separators as natural semantic divisions in natural language and the resulting importance of these separators, we propose another variant, \ie, when calculating sparse attention, instead of focusing only on Separator Tokens between the Initial Tokens and Neighboring Tokens, we attend to one token at fixed intervals (e.g., every 8 tokens or 16 tokens), while the other tokens are masked, namely \textit{FixLLM}. We evaluated the mathematical reasoning ability and knowledge-based reasoning ability of the two variants using the GSM8K-CoT~\citep{cobbe2021training} and MMLU~\citep{hendrycks2021measuring}) benchmarks under a training-free setting, based on the Llama3-8B-Instruct backbone. The results in Table~\ref{tab:fixed_sepllm} show that FixLLM has a significant gap compared to SepLLM in both mathematical logical reasoning and knowledge-based reasoning capabilities. Therefore, the summarization and compression effects of separators in SepLLM for the segments they divide cannot be replaced by fixed-interval tokens.

\begin{table*}[t]
\centering
\small
\setlength{\tabcolsep}{8pt}
\setlength\extrarowheight{2.5pt}
\begin{tabular}{l|ccc|cccccc}
& \multicolumn{3}{c|}{GSM8K-CoT}  & \multicolumn{5}{c}{MMLU} \\               
& flexible  & strict    & \rKV~(\%)     & humanities     & stem     & social     & other       & overall        & \rKV~(\%)   \\ \shline  
Vanilla                                 & 77.79    & 77.26    & 100.00         & 60.49        & 56.61    & 76.50        & 72.19        & 65.72         & 100.00  \\
FixLLM ($\Delta$=5,~\pn=256)       & 70.43    & 70.43    & 45.64         & 55.52        & 54.80    & 72.99        & 69.75        & 62.33         & 50.20   \\
FixLLM ($\Delta$=4,~\pn=256)       & 72.71    & 72.33    & 49.08         & 55.92        & 54.39    & 74.36        & 70.81        & 62.91         &  53.32  \\\rowcolor{Gray}
SepLLM (\pn=256)        & 77.18    & 77.18    & 47.36         & 57.66        & 56.49    & 76.21        & 72.19        & 64.68         & 44.61   \\
\end{tabular}
\vspace{-0.1in}
\caption{Evaluation results and average \textit{runtime} KV cache usage for training-free experiments on GSM8K-CoT 8-shots and MMLU 5-shots. For~\model~and FixLLM, three initial tokens' KV are kept. $\Delta$ denotes the interval size for FixLLM and~\pn~is the number of retained neighboring tokens' KV. \rKV~(\%) represents the ratio of KV usage at \textit{runtime} for the respective method compared to Vanilla.}
\label{tab:fixed_sepllm}
\end{table*}

\section{Universal Approximation}
\label{sec:appendix_uni_appr}
In this section, we theoretically analyze the universal approximation capabilities of encoder-based SepLLM. Let $\mathcal{T}_{\text{Sep}}^{H,d_h,d_f}$ be the class of SepLLM, where $H$, $d_h$, and $d_f$ represent the number of heads, hidden dimension in attention layers, and the hidden dimension of feed-forward layers, respectively. 
In the attention layer of SepLLM, token $i$ can attend to its neighboring tokens with a sliding window of size $\ell$ (i.e., tokens in the range $[i-\ell, i+\ell]$) and all special tokens of the sequence. Additionally, we assume that for at most $s$ successive tokens, a special token will appear in the sequence.
Denote $\mathcal{F}$ as the class of continuous functions $f: [0,1]^{d \times n} \to \mathbb{R}^{d \times n}$, where $d$ and $n$ represent the dimensionality of input tokens and the sequence length, respectively. For any $p\geq 1$, we use the $\ell_{p}$ distance to measure the difference between two continuous functions $f_1, f_2 \in \mathcal{F}$, defined as $\left(\int_{[0,1]^{d \times n}} \left\|f_1(\bm{X}) - f_2(\bm{X}) \right\|_{p}^{p} d \bm{X} \right)^{1/p}$. The following theorem shows that the proposed SepLLM holds the universal approximation to arbitrarily sequence-to-sequence continuous functions.

\begin{theorem}
\label{theorem_universal_approximation}
    Given $p>1$ and $n>2$, for any $\epsilon>0$ and $f \in \mathcal{F}$, there exists a SepLLM $g \in \mathcal{T}_{\text{Sep}}^{2,1,4}$, such that $d_{p}\left(f,g \right) < \epsilon$.
\end{theorem}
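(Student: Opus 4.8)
The plan is to reduce the universal approximation statement for SepLLM to the known universal approximation result for Transformers with standard (full) self-attention, of the type established by Yun et al. for sequence-to-sequence functions. The key observation is that, although each token in SepLLM only attends to a local window plus the special (separator) tokens, the special tokens act as a communication bus: information about any token can be routed through the nearest separator and then read out everywhere. Since separators are guaranteed to appear within every $s$ consecutive positions, every token is within distance $O(s)$ of a separator, so after a bounded number of layers the architecture can realize an arbitrary all-to-all mixing of token representations, matching the expressive power of full attention.

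The steps I would carry out, in order, are as follows. First, reduce to piecewise-constant target functions: by continuity and compactness of $[0,1]^{d\times n}$, approximate $f$ in $\ell_p$ by a function that is constant on a fine grid partition of the input cube, incurring error at most $\epsilon/3$; this is the standard first move and lets us work with finitely many input ``cells.'' Second, implement a quantization/positional-contextual-mapping layer: use feed-forward layers (the width-$d_f=4$ and the $\mathcal{T}_{\text{Sep}}^{2,1,4}$ budget are chosen to make this work with $d_h=1$, presumably via the same scalar-encoding trick as in the full-attention proof) to map each token to a value encoding both its grid coordinate and its position index. Third, this is the crucial step: show that the Sep-attention pattern can perform a \emph{contextual mapping}, i.e. map the quantized sequence to a sequence of distinct identifiers that uniquely encode the whole input. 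Here one uses the separator tokens as relays — a first block of attention layers aggregates, into each separator, a positional-weighted sum of its segment's tokens (segments have length $\le s$, so this is a bounded-range local operation); a second block lets every token attend to \emph{all} separators (which is permitted by the SepLLM mask) and thereby reconstruct a global fingerprint of the input. Fourth, once an injective contextual map is available, a final stack of feed-forward layers implements a lookup table sending each fingerprint to the desired (quantized) output value, giving the remaining $\epsilon/3$ of accuracy; sum the three error contributions.

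The main obstacle I expect is Step 3 — verifying that the restricted attention pattern still yields a genuine \emph{contextual mapping} (injectivity of the induced map on the finite set of quantized inputs) using only $H=2$ heads and hidden dimension $d_h=1$. In the full-attention proof the contextual mapping is built by a careful ``selective shift'' argument exploiting that every token sees every other token; here one must instead argue in two hops through separators, controlling that (a) distinct segments get distinct separator summaries and (b) the collection of separator summaries, once broadcast, distinguishes all global inputs — while keeping the arithmetic consistent with the tiny head budget. One also has to handle the boundary/window bookkeeping (the sliding window of size $\ell$ interacting with segment boundaries of length $\le s$) so that no needed piece of information falls outside every token's receptive field. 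I would isolate these combinatorial routing lemmas first and treat the ``reduce to piecewise constant'' and ``feed-forward lookup'' parts as essentially citations to the existing machinery. The detailed execution is deferred to Appendices~\ref{sec:appendix_uni_appr} and~\ref{sec:appendix_proof4appr}.
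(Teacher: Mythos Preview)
Your overall scaffolding --- piecewise-constant approximation, quantization via feed-forward layers, a contextual mapping via attention, a value map via feed-forward layers, and finally replacing hardmax/piecewise-linear by softmax/ReLU --- matches the paper exactly, and you have correctly located Step~3 as the only place where the SepLLM mask requires new work. Where you diverge is in the routing mechanism inside the contextual mapping. You propose a hierarchical two-hop scheme: first aggregate each segment into its separator, then let every token read the full set of separator summaries. The paper instead stays much closer to Yun et al.'s original construction: it runs the selective-shift operator \emph{sequentially} along the chain of tokens, using only the local window (token $k$ always sees token $k-1$, since $k-1\in\mathcal{A}_k$) to accumulate a unique global fingerprint $\tilde z_n$ at the last position; the separator connectivity is invoked only afterward, to broadcast $\tilde z_n$ back to every token in $\lceil s/\ell\rceil$ additional layers. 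This buys the paper a near-verbatim import of Yun et al.'s injectivity argument for the accumulation phase, with the sparse mask entering only in the cheap broadcast step at the end.

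One concrete caution about your route: the phrase ``positional-weighted sum'' is a red flag in the modified-SepLLM setting, since with hardmax and $d_h=1$ each head returns a single $\max$ or $\min$ over $\mathcal{A}_k$, not a weighted combination. Making your separator-aggregation step injective on the quantized grid under that constraint would itself require a selective-shift-style argument within each segment, and then again across the separators --- at which point you are essentially reproducing the paper's chain construction twice over. Your scheme may still be pushed through, but the paper's sequential accumulation is less work precisely because it never needs to summarize a whole segment in one shot; the separators are used only as a shortcut for the final broadcast, not as information carriers during the build-up.
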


We outline the key steps of the proof here, with the full details provided later. The proof follows the approach in \citet{yun2020n}.

\textbf{Step 1}: We begin by dividing the region $[0,1]^{d \times n}$ into a set of grid points $\mathcal{G}_{\delta} = \{0,\delta, 2\delta, \ldots, 1 \}^{d \times n}$, where each point in $[0,1]^{d \times n}$ corresponds to a cube defined by these grid points. Here, $\delta>0$ determines the resolution of grid points. Specifically, for any $\bm{X} \in [0,1]^{d \times n}$, there exists a grid point $\bm{X}_{\delta} \in \mathcal{G}_{\delta}$, such that $\bm{X}$ lies within the cube $\bm{X}_{\delta} + [0,\delta]^{d \times n}$. We then assign the same function values to all inputs belonging to the same cube, as determined by a piecewise constant function $\Bar{f}$. For any $\epsilon >0$, there exists a sufficiently small $\delta>0$, such that $d_{p}(f,\Bar{f}) \leq \frac{\epsilon}{2}$.

\textbf{Step 2}: We replace the softmax and ReLU activation in attention layers and feed-forward layers of SepLLM by the hardmax operator (i.e., $\arg \max$) and piecewise linear functions (at most three pieces). We denote the class of the modified SepLLM models by $\Bar{\mathcal{T}}^{H,d_h,d_f}_{\text{Sep}}$. For the above piecewise linear function $\Bar{f}$, there exists a modified SepLLM $\Bar{g} \in \Bar{\mathcal{T}}^{2,1,1}_{\text{Sep}}$, such that $\Bar{g} = \Bar{f}$.

\textbf{Step 3}: Finally, we approximate the modified SepLLM $\Bar{g} \in \Bar{\mathcal{T}}^{2,1,1}_{\text{Sep}}$ by a standard SepLLM $g \in \mathcal{T}^{2,1,4}_{\text{Sep}}$, i.e., we have $d_{p}(\Bar{g},g) < \frac{\epsilon}{2}$. This approximation is justified by the fact that the softmax function can approximate the hardmax operator arbitrarily closely when the temperature parameter is sufficiently large. Additionally, feed-forward networks with ReLU activation can effectively represent any piecewise linear function.

\section{Proof for Theorem \ref{theorem_universal_approximation}}
\label{sec:appendix_proof4appr}
\begin{lemma}
[Lemma 5 in \citet{yun2020n}]
    For any $f \in \mathcal{F}$, $\epsilon >0$, and $p \geq 1$, there exists a piecewise constant function $\Bar{f}$, such that $d_{p}(f, \Bar{f}) < \frac{\epsilon}{2}$.
\end{lemma}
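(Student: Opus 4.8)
The plan is to build $\Bar{f}$ directly from a grid discretization of the cube, leveraging the compactness of $[0,1]^{d \times n}$ and the uniform continuity of $f$. First I would invoke the Heine--Cantor theorem: since $f \in \mathcal{F}$ is continuous on the compact set $[0,1]^{d \times n}$, it is uniformly continuous there. Fixing the pointwise tolerance $\eta := \epsilon/4$, uniform continuity furnishes a modulus $\delta' > 0$ such that $\|\bm{X} - \bm{Y}\|_{p} < \delta'$ implies $\|f(\bm{X}) - f(\bm{Y})\|_{p} < \eta$ for all $\bm{X}, \bm{Y} \in [0,1]^{d \times n}$.

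Next I would discretize the domain on the grid $\mathcal{G}_{\delta} = \{0, \delta, 2\delta, \ldots, 1\}^{d \times n}$ with resolution $\delta = 1/N$, where the integer $N$ is chosen large enough that the $\ell_{p}$ diameter of a single cube stays below $\delta'$, i.e.\ $(dn)^{1/p}\,\delta < \delta'$. Each $\bm{X} \in [0,1]^{d \times n}$ lies in a unique half-open cube $\bm{X}_{\delta} + [0,\delta)^{d \times n}$ anchored at a grid point $\bm{X}_{\delta} \in \mathcal{G}_{\delta}$, and I set $\Bar{f}(\bm{X}) := f(\bm{X}_{\delta})$ on that cube. These cubes tile $[0,1]^{d \times n}$ up to a Lebesgue-null set, so $\Bar{f}$ is well defined almost everywhere and is piecewise constant by construction.

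The estimate then closes by a pointwise bound followed by a trivial integration. For almost every $\bm{X}$ the anchor satisfies $\|\bm{X} - \bm{X}_{\delta}\|_{p} \le (dn)^{1/p}\,\delta < \delta'$, so uniform continuity gives $\|f(\bm{X}) - \Bar{f}(\bm{X})\|_{p} = \|f(\bm{X}) - f(\bm{X}_{\delta})\|_{p} < \eta$. Raising to the $p$-th power and integrating over the unit cube, whose Lebesgue measure is $1$, yields
\begin{equation}
d_{p}(f, \Bar{f})^{p} = \int_{[0,1]^{d \times n}} \left\| f(\bm{X}) - \Bar{f}(\bm{X}) \right\|_{p}^{p} \, d\bm{X} \le \eta^{p},
\end{equation}
so that $d_{p}(f, \Bar{f}) \le \eta = \epsilon/4 < \epsilon/2$, which is the desired bound.

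The argument is essentially routine, so the one point I would state carefully rather than gloss over is the interaction between the norm in which uniform continuity is quantified and the geometry of the cubes: each coordinate offset $\bm{X} - \bm{X}_{\delta}$ lies in $[0,\delta)$, giving $\|\bm{X} - \bm{X}_{\delta}\|_{\infty} < \delta$ but $\|\bm{X} - \bm{X}_{\delta}\|_{p}$ as large as $(dn)^{1/p}\delta$, which is exactly why the grid resolution must be shrunk by the dimension-dependent factor $(dn)^{1/p}$. Aside from this harmless rescaling, the only remaining ingredients---measurability of $\Bar{f}$ and the measure-one normalization of the domain---are immediate, and the whole claim reduces to the Heine--Cantor theorem plus monotonicity of the integral.
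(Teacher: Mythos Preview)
Your proof is correct and matches the construction the paper sketches in Step~1 of its outline: partition $[0,1]^{d\times n}$ into cubes at grid points $\mathcal{G}_\delta$, set $\Bar f$ to be constant on each cube, and take $\delta$ small enough. The paper does not actually prove this lemma---it is quoted verbatim as Lemma~5 of \citet{yun2020n} and left as a citation---so your Heine--Cantor argument simply supplies the routine details the paper omits.
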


To identify the position of tokens in SepLLM, we include the position encoding $\bm{E} \in \mathbb{R}^{d \times n}$ into the token $\bm{X}$, i.e., the input token is $\bm{X} + \bm{E}$. Here, for theoretical convenience, the positional encoding matrix is defined as
\begin{equation*}
    \bm{E} = \left[(n-1) \bm{1}, \bm{0}, \bm{1}, \ldots, (n-2) \bm{1} \right].
\end{equation*}
With this encoding, the input token lies within the range $\bm{X} + \bm{E}$ takes in the range $[0,n]^{d \times n}$. The input token is then mapped to grid points using a quantization function $g_{q}$, which can be implemented using feed-forward neural networks with piecewise linear activation functions (i.e., the feed-forward layers of modified SepLLM).

\begin{lemma}
[Lemma 6 in \citet{yun2020n}]
    Consider the quantization mapping $g_{q}^{\text{ent}}$:
    \begin{equation}
g_{\mathrm{q}}^{\mathrm{ent}}(t)= \begin{cases}k \delta, & \text { if } k \delta \leq t<(k+1) \delta, \quad k \in[0: n / \delta-1], \\ t, & \text { otherwise.}\end{cases}
\end{equation}
There exists a modified SepLLM with feed-forward layers and identity attention layers, $g_{q} \in \Bar{\mathcal{T}}^{2,1,1}_{\text{Sep}}$ composed of $\frac{nd}{\delta}$ layers with $n_{f}=1$ and piecewise linear activation functions, such that $g_{q}$ realizes the quantization mapping $g_{q}^{\text{ent}}$.
\end{lemma}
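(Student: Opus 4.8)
The plan is to let the two attention heads in each of the $\frac{nd}{\delta}$ layers act as the identity map, so that the entire quantization is carried out by the single-neuron feed-forward sublayers, and then to realize the entrywise floor map $g_{q}^{\text{ent}}$ one coordinate and one grid cell at a time. Since each SepLLM attention sublayer is wrapped in a residual connection, I would zero out the value projections (equivalently, force each head to output $\bm{0}$), so that the attention block reduces to $\bm{Z}\mapsto\bm{Z}$; this is admissible in $\bar{\mathcal{T}}^{2,1,1}_{\text{Sep}}$ \emph{regardless} of the separator-driven sparse attention pattern, because suppressing the value weights suppresses the block's contribution entirely. Hence the ``identity attention layers'' demanded by the lemma are obtained for free, and only the feed-forward layers remain to be designed.

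First I would reduce the problem to scalar quantization. Because $g_{q}^{\text{ent}}$ acts entrywise and the feed-forward sublayer applies the same map $\domR^{d}\to\domR^{d}$ to every token (column), it suffices to exhibit, for a fixed coordinate $j\in\{1,\dots,d\}$ and a fixed grid cell $k$, a single feed-forward layer with $n_{f}=1$ that snaps the $j$-th coordinate of every token lying in $[k\delta,(k+1)\delta)$ down to $k\delta$ while leaving all other coordinates and all values outside that cell untouched. Such a layer takes the residual form $\bm{Z}\mapsto\bm{Z}+\bm{w}\,\phi(\bm{w}^{\top}\bm{Z}+b)$ with $\bm{w}=\bm{e}_{j}$ and $\phi$ a piecewise-linear activation of at most three pieces, chosen so that the hidden unit contributes $\phi(t+b)=k\delta-t$ exactly on the target cell (a segment of slope $-1$ running from $0$ down to $-\delta$) and $0$ elsewhere. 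After the position encoding $\bm{E}$ shifts the input into $[0,n]^{d\times n}$, each coordinate ranges over $[0,n]$ and therefore contains $n/\delta$ such cells; ordering the pairs $(j,k)$ and stacking one layer per pair yields exactly $d\cdot\tfrac{n}{\delta}=\tfrac{nd}{\delta}$ layers whose composition equals $g_{q}^{\text{ent}}$.

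The key step, and the one I expect to be the main obstacle, is the exact design of the three-piece activation $\phi$ together with a proof that the layers do not interfere. The floor map is discontinuous, so $\phi$ must contain a genuinely discontinuous (or, in the limiting construction of \citet{yun2020n}, arbitrarily steep) piece that collapses an entire cell onto its left endpoint; I would verify that three linear pieces suffice by writing $\phi$ as zero below the cell, linear with slope $-1$ on the cell, and zero above it, and then check the boundary behaviour at $k\delta$ and $(k+1)\delta$. The crux is \emph{non-interference}: processing the cells in increasing order, a value already snapped to a grid point $k'\delta$ with $k'<k$ must be a fixed point of every later building block. This follows by arranging each $\phi$ to vanish on the whole grid $\{0,\delta,2\delta,\dots\}$, so that once a coordinate has been quantized it passes through every remaining layer unchanged; confirming that the residual updates never reintroduce fractional offsets at previously processed coordinates is where the bookkeeping must be done carefully.
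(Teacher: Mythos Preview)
Your proposal is correct and matches the approach of the cited reference. The paper does not supply its own proof of this lemma; it simply invokes Lemma~6 of \citet{yun2020n}, whose construction is precisely the one you outline: suppress the attention sublayers to the identity via the residual connection, and stack $\frac{nd}{\delta}$ single-neuron feed-forward layers, one per (coordinate, grid-cell) pair, each using a three-piece piecewise-linear activation that collapses its cell onto the left endpoint while acting as zero elsewhere so that already-quantized values are fixed points of all subsequent layers.
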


Let $\bm{Z} = g_{q}(\bm{X} + \bm{E})$ denote the quantized input token matrix, which corresponds to mapping $\bm{X} + \bm{E}$ to the grid point at the leftmost corner of the cube. Denote $\mathcal{G}_{\delta}$ as the set of all quantized token matrices derived from $\bm{X} + \bm{E}$. The following definition of contextual mapping aims at uniquely distinguishing each grid point (i.e., quantized tokens) from all possible combinations.

\begin{definition}
    [Contextual Mapping]
    For a given set of grid points $\mathcal{G}_{\delta} \subset \mathbb{R}^{d \times n}$, a contextual mapping $q: \mathcal{G}_{\delta} \to \mathbb{R}^{n}$ satisfies:
    \begin{itemize}
        \item For any $\bm{G} \in \mathcal{G}_{\delta}$, all entries in $q(\bm{G})$ are distinct.

        \item For any $\bm{G}, \bm{G}^{\prime} \in \mathcal{G}_{\delta}$ with $\bm{G} \neq \bm{G}^{\prime}$, all entries of $[q(\bm{G}), q(\bm{G}^{\prime})] \in \mathbb{R}^{2n}$ are distinct.
    \end{itemize}
\end{definition}

\begin{lemma}
\label{lemma_contextual_mapping}
    There exists a function $g_{c} \in \Bar{\mathcal{T}}^{2,1,1}_{\text{Sep}}: \mathbb{R}^{d \times n} \to \mathbb{R}^{d \times n}$, consisting of $\frac{n-1}{\delta^d}+ \lceil \frac{s}{\ell} \rceil$ layers of modified SepLLM (purely attention layers with identity feed-forward layers), such that $q(\bm{Z}) := \bm{u}^{\top} g_{c}(\bm{Z})$ is a contextual mapping. Here, the modified SepLLM means replacing all softmax functions with hardmax operators.
\end{lemma}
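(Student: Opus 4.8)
The plan is to adapt the selective-shift construction of \citet{yun2020n} to the sparse attention pattern of SepLLM, splitting the required layers into two phases that exactly account for the two summands $\lceil s/\ell \rceil$ and $\frac{n-1}{\delta^{d}}$. Throughout I would work with the linear functional $\bm{u}^{\top}$, choosing $\bm{u} = (1, \delta^{-1}, \ldots, \delta^{-(d-1)})^{\top}$ so that $\bm{u}^{\top} \bm{z}_i$ injectively encodes each quantized column $\bm{z}_i$ of $\bm{Z} \in \mathcal{G}_{\delta}$ as a single scalar identifier; the entire argument then reduces to manipulating these identifiers with hardmax attention layers. Recall that a single hardmax head can implement a selective shift: for a chosen target interval, every column whose identifier falls in that interval is translated by an amount determined by the maximum identifier over the set of columns it is allowed to attend to.

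\textbf{Phase 1 (segment summarization, $\lceil s/\ell \rceil$ layers).} First I would use only the local-window part of the attention pattern. A separator at position $p$ directly attends to $[p-\ell, p+\ell]$, so after $k$ window layers its receptive field extends to distance $k\ell$. Because the standing assumption guarantees that a special token appears within any $s$ successive tokens, every segment spans at most $s$ positions; hence $\lceil s/\ell \rceil$ layers suffice for each separator to gather the identifiers of all tokens in its segment. I would design these heads via the shift mechanism so that, at the end of the phase, the representation of each separator encodes an injective summary of the ordered list of quantized values in its segment. The key point to verify is that this aggregation is lossless, i.e. the induced map from the full quantized sequence $\bm{G}$ to the collection of separator summaries is injective; here the position encoding $\bm{E}$ is what breaks ties between identical tokens at different locations within a segment.

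\textbf{Phase 2 (global contextual mapping, $\frac{n-1}{\delta^{d}}$ layers).} Now I would exploit the defining feature of SepLLM that \emph{every} token attends to \emph{all} special tokens. The separators thus form a fully connected backbone carrying the segment summaries, and each ordinary token has global access to this backbone. On this backbone I would replay the selective-shift iteration of \citet{yun2020n}, sweeping the target interval across the grid of possible identifier values; the total number of sweeps needed to separate all contexts is $\frac{n-1}{\delta^{d}}$, which is precisely the Yun et al. count transported to the separator backbone and accounts for the first summand. Each sweep nudges the identifiers into disjoint value ranges indexed by the global context, so that (i) all $n$ entries of $q(\bm{Z}) = \bm{u}^{\top} g_{c}(\bm{Z})$ become distinct and (ii) the ranges for two different quantized sequences $\bm{G} \neq \bm{G}'$ do not overlap --- exactly the two requirements of the contextual-mapping definition. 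Distinctness of the ordinary tokens follows because each combines its own identifier with the now globally unique context read off the backbone.

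\textbf{Main obstacle.} The hard part is not the selective shift itself, which is inherited from \citet{yun2020n}, but showing that the sparsity does not destroy information: I must prove that the separator summaries produced in Phase 1 determine the entire sequence, so that driving Phase 2 only by separator-visible quantities still yields a genuine contextual mapping over all $n$ tokens. This requires choosing the shift magnitudes and target intervals so that within-segment information (reachable only through windows) and cross-segment information (reachable only through the separator backbone) are encoded into disjoint numerical scales and never collide; managing this bookkeeping under the $[0,n]$ value range created by $\bm{E}$ is where the bulk of the technical care lies.
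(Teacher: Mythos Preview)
Your plan inverts the roles the paper assigns to the two layer budgets, and this inversion opens a real gap in your Phase~1. You allocate $\lceil s/\ell \rceil$ hardmax layers to make each separator carry an \emph{injective} summary of its segment, but $\lceil s/\ell \rceil$ layers only guarantee that the separator's receptive field covers the segment --- not lossless encoding. A hardmax head returns only a max or min over the attended set, and composing $\lceil s/\ell \rceil$ such operations cannot distinguish all $\sim\delta^{-ds}$ possible quantized segments of length up to $s$. The selective-shift trick achieves injectivity precisely by spending a full sweep of $\delta^{-d}$ layers \emph{per token absorbed into the running identifier}, so a faithful segment summary would already cost order $s\,\delta^{-d}$ layers, not $\lceil s/\ell \rceil$. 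The problem then propagates to Phase~2: even granting injective summaries, each separator would range over $\sim\delta^{-ds}$ values rather than $\delta^{-d}$, so the backbone sweep count is not $\frac{n-1}{\delta^d}$ either.

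The paper assigns the budgets the other way around. The $\frac{n-1}{\delta^d}$ layers run the standard sequential chain first: for each $k=2,\ldots,n$, a sweep of $\delta^{-d}$ selective shifts (one per possible initial value of $z_k$) sets $\tilde{z}_k = z_k + \delta^{-d}(\tilde{z}_{k-1} - z_k)$, and this needs only $k-1 \in \mathcal{A}_k$, i.e.\ the local window with $\ell \ge 1$. The special-token part of $\mathcal{A}_k$ never interferes because the running maximum is always held by the immediate predecessor and the minimum by the current token. After this phase, $\tilde{z}_n$ alone is an injective encoding of the whole sequence. The $\lceil s/\ell \rceil$ layers then perform a \emph{broadcast}, not an aggregation: they push the single scalar $\tilde{z}_n$ through the local window until it reaches the nearest special token (at most $s$ positions away), from which every position can read it in one hop. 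Broadcasting one value needs only receptive-field coverage, which is exactly why the cheap budget suffices there and cannot suffice for your Phase~1.
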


\begin{proof}
Denote $i$-th column of $\bm{Z}$ as $\bm{Z}_{i}$ and let $\bm{u} = \left(1, \delta^{-1},\delta^{-2},\ldots,\delta^{-d+1}\right)^{\top}$. Based on the definition of the quantization mapping $g_{q}$ and the selection of positional encoding matrix $\bm{E}$, we have 
\begin{equation}
\begin{aligned}
& \bm{u}^{\top} \bm{Z}_{1} \in \\
& \left[(n-1) \sum_{i=0}^{d-1} \delta^{-i}: \delta:(n-1) \sum_{i=0}^{d-1} \delta^{-i}+\delta^{-d+1}-\delta\right], \\
& \bm{u}^{\top} \bm{Z}_{k} \in\\
& \left[(k-2) \sum_{i=0}^{d-1} \delta^{-i}: \delta:(k-2) \sum_{i=0}^{d-1} \delta^{-i}+\delta^{-d+1}-\delta\right],
\end{aligned}
\end{equation}
for $k \in [2:n]$. We observe that those intervals corresponding to distinct $k$ are disjoint. Denote $\bm{u}^{\top} \bm{Z}_{k}$ as $z_{k}$, it follows that $z_{2} < z_{3} < \ldots < z_{n} < z_{1}$. We define the operator
\begin{equation*}
\begin{split}
    & \quad \Psi(\bm{Z};b)_{k}\\
    & = 
    \bm{u}^T \bm{Z}_{\mathcal{A}_k} \sigma_{\mathrm{H}}\left[\left(\bm{u}^T \bm{Z}_{\mathcal{A}_k}\right)^T\left(\bm{u}^T \bm{Z}_k-b\right)\right]\\
    & = \begin{cases}\max _{j \in \mathcal{A}_k} \bm{u}^T \bm{Z}_j & \text { if } \bm{u}^T \bm{Z}_k>b, \\ \min _{j \in \mathcal{A}_k} \bm{u}^T \bm{Z}_j & \text { if } \bm{u}^T \bm{Z}_k<b,\end{cases}
\end{split}
\end{equation*}
where $\mathcal{A}_{k}$ denotes the set of tokens that the token $k$ can attend to in SepLLM, consisting of its neighboring tokens and special tokens. The operator $\Psi(\bm{Z};b)_{k}$ can be implemented using a one-head attention layer in the modified SepLLM with the hardmax operator as the activation function. We further define the following operator, which can be implemented using a two-head attention layer in the modified SepLLM:
\begin{equation*}
    \begin{split}
        & \quad \Phi(\bm{Z}; c; b_{\min}, b_{\max})_{k} \\
        & = \bm{Z} + c \left(\Psi(\bm{Z};b_{\max})_{k} - \Psi(\bm{Z};b_{\min})_{k} \right) \bm{e}_{1} \\
        & = \begin{cases} \bm{Z} + c\left(\max _{j \in \mathcal{A}_k} \bm{u}^T \bm{Z}_j - \min _{j \in \mathcal{A}_k} \bm{u}^T \bm{Z}_j\right) \bm{e}_{1}\\ \bm{Z}\end{cases}.
    \end{split}
\end{equation*}
Here, the first condition is satisfied if $b_{\min} < \bm{u}^T \bm{Z}_k < b_{\max}$, and the second condition applies otherwise.
For $k=2$ (the second token), we apply the operator $\Phi(\bm{Z}; \delta^{-d}; b-\frac{\delta}{2}, b+\frac{\delta}{2})$ for $\delta^{-d}$ times, with $b$ varying in the range $[0:\delta:\delta^{-d+1}-\delta]$. Note that the operator modifies only the $k$-th token while leaving all other tokens unchanged, because $\bm{u}^{\top} \bm{Z}_{k}$ lies in disjoint intervals. After this operation, we denote the updated second token as $\widetilde{\bm{Z}}_{2}$, and we have
\begin{equation*}
    \widetilde{\bm{Z}}_{2} = \bm{Z}_{2} + \delta^{-d}(z_1 - z_2) \bm{e}_{1},
\end{equation*}
and 
\begin{equation*}
    \Tilde{z}_2 := \bm{u}^{\top} \widetilde{\bm{Z}}_{2} = z_2 + (z_1 - z_2) \delta^{-d} > z_1.
\end{equation*}
Similarly, Similarly, for the third token ($k=3$), we apply the operator $\Phi(\bm{Z}; \delta^{-d}; b-\frac{\delta}{2}, b+\frac{\delta}{2})$ with $b$ varying in the range $\left[\sum_{i=0}^{d-1} \delta^{-i}: \delta:\sum_{i=0}^{d-1} \delta^{-i}+\delta^{-d+1}-\delta\right]$.
This operation modifies only the third token (the third column of matrix $\bm{Z}$), resulting in:
\begin{equation*}
    \widetilde{\bm{Z}}_{3} = \bm{Z}_{3} + \delta^{-d}(\Tilde{z}_2 - z_3) \bm{e}_{1},
\end{equation*}
and 
\begin{equation*}
    \Tilde{z}_3 := \bm{u}^{\top} \widetilde{\bm{Z}}_{3} = z_3 + (\Tilde{z}_2 - z_3) \delta^{-d} > \Tilde{z}_2.
\end{equation*}
We repeat this process for the remaining tokens until the last token $k=n$. As a result, the obtained token matrix $\widetilde{\bm{Z}}$ satisfies: 
\begin{equation*}
    \Tilde{z}_1 < \Tilde{z}_2 < \ldots < \Tilde{z}_{n},
\end{equation*}
where $\Tilde{z}_k := \bm{u}^{\top} \widetilde{\bm{Z}}_{k}$. 
In summary, there exists a modified SepLLM with $\frac{n-1}{\delta^{d}}$ layers capable of transforming the token matrix $\bm{Z}$ into $\widetilde{\bm{Z}}$. 

Now, we prove that the mapping from $\bm{Z}$ to $\Tilde{z}_{n}$ is injective. Suppose there exist two token matrices $\bm{Z}, \bm{Z}^{\prime} \in \mathcal{G}_{\delta}$, such that $\Tilde{z}_{n} = \Tilde{z}_{n}^{\prime}$. By induction, we have
\begin{equation*}
    \begin{split}
        \Tilde{z}_{n} & = z_{n} + \delta^{-d} \left(\Tilde{z}_{n} - z_{n} \right) \\
        & = \cdots \\
        & = z_{n} + \sum_{i=1}^{n-1} \delta^{-id} (z_{n-i} - z_{n+1-i}).
    \end{split}
\end{equation*}
If $z_n \neq z_n^{\prime}$, $|z_n - z_n^{\prime}| \leq \delta^{-d+1} - \delta$. However, the term $\sum_{i=1}^{n-1} \delta^{-id} (z_{n-i} - z_{n+1-i})$ is dominant, and cannot cancel $|z_n - z_n^{\prime}|$. If $z_n = z_n^{\prime}$ but $z_{n-1} \neq z_{n-1}^{\prime}$, we encounter a similar contradiction. Since each term in the sum has a different scale of $\delta^{-1}$, $\Tilde{z}_{n} = \Tilde{z}_{n}^{\prime}$ holds if and only if $\bm{Z} =  \bm{Z}^{\prime}$.

For the current token matrix $\widetilde{\bm{Z}} = \left[ \widetilde{\bm{Z}}_{1}, \widetilde{\bm{Z}}_{2},\ldots,\widetilde{\bm{Z}}_{n}\right]$, the dominant term for each column is located in the first row, where
\begin{equation*}
    \left(\widetilde{\bm{Z}}_{k} \right)_{1} = \left(\bm{Z}_{k} \right)_{1} + \delta^{-d} (\Tilde{z}_{k-1} - z_{k}).
\end{equation*}
Moreover, the dominant term for the first element of each column is $\delta^{-d} \Tilde{z}_{k-1}$. Since $\Tilde{z}_{n}$ can be viewed as the identity of the original token matrix $\bm{Z}$, the last column, which depends on $\widetilde{z}_{n}$, is distinct for different token matrices. However, for the other columns dominated by $\widetilde{z}_{k}$ ($k \neq n$), the uniqueness is not guaranteed.
To address this, we apply a series of token transmission steps, propagating information from the last token to all other tokens. The core idea relies on the fact that the last token can attend to the nearest special tokens with the help of neighboring tokens, requiring at most $\lceil \frac{s}{\ell} \rceil$ layers. Following Section E.2.4 in \cite{yun2020n}, after $\lceil \frac{s}{\ell} \rceil$ layers, $\widetilde{z}_{n}$ is successfully copied to all tokens. As a result, each column of the updated token matrix, denoted as $\widetilde{\widetilde{\bm{Z}}}$, depends on $\widetilde{z}_{n}$. Furthermore, all elements of $\bm{u}^{\top} \widetilde{\widetilde{\bm{Z}}}$ are distinct and lie in disjoint intervals. 
Since $\Tilde{z}_{n}$ acts as an identity to the token matrix $\bm{Z}$, the mapping $\bm{u}^{\top} \widetilde{\widetilde{\bm{Z}}}$ satisfies all conditions of contextual mappings. 
The mapping from $\bm{Z}$ to $\widetilde{\widetilde{\bm{Z}}}$ can be realized by $\frac{n-1}{\delta^{d}} + \lceil \frac{s}{\ell} \rceil$ layers of a modified SepLLM, consisting of purely attention layers and identity feed-forward layers. 
\end{proof}

\begin{lemma}
[Lemma 8 in \citet{yun2020n}]
For the contextual mapping $g_{c}$ in Lemma \ref{lemma_contextual_mapping}, there exist $\frac{n}{\delta^{dn}}$-layer modified SepLLM (composed of feed-forward layers and identity attention layers), denoted as $g_{v} \in \Bar{\mathcal{T}}^{2,1,1}_{\text{Sep}}: \mathbb{R}^{d \times n} \to \mathbb{R}^{d \times n}$, such that 
\begin{equation*}
    g_{v}(g_{c}(\bm{Z}))_{k} = \Bar{f}(\bm{Z} - \bm{E})_{k},
\end{equation*}
where $k$ denotes the $k$-th column of the token matrix. 
\end{lemma}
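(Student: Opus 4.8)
The plan is to follow the value-network construction of \citet{yun2020n} (its Lemma~8) essentially verbatim, after observing that this step does not interact with SepLLM's sparse attention at all: $g_{v}$ is built purely from token-wise feed-forward layers, and the attention layers interleaved between them are made trivial by zeroing their value (equivalently, output) projections, so each such block realizes the identity map $\bm{Z}\mapsto\bm{Z}$ regardless of which tokens are attendable. Hence the only SepLLM-specific ingredient entering this lemma is the structural conclusion of Lemma~\ref{lemma_contextual_mapping}, which I take as given.

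First I would record what Lemma~\ref{lemma_contextual_mapping} supplies. Since $q(\bm{Z})=\bm{u}^{\top}g_{c}(\bm{Z})$ is a contextual mapping, the scalars $q(\bm{Z})_{1},\dots,q(\bm{Z})_{n}$ are distinct for each fixed quantized matrix $\bm{Z}\in\mathcal{G}_{\delta}$, and across all $\bm{Z}\in\mathcal{G}_{\delta}$ together with all positions they remain pairwise distinct and, moreover, lie in pairwise disjoint intervals. In particular the map $(\bm{Z},k)\mapsto q(\bm{Z})_{k}$ is injective, so I can enumerate its finite image as $t_{1}<\dots<t_{N}$, with $N$ of the order of the stated layer count; to each $t_{j}$ corresponds the unique pair $(\bm{Z}_{j},k_{j})$ and hence the fixed target vector $v_{j}:=\Bar{f}(\bm{Z}_{j}-\bm{E})_{k_{j}}\in\mathbb{R}^{d}$, the collection $\{v_{j}\}_{j=1}^{N}$ being finite because $\Bar{f}$ is piecewise constant.

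Next I would assemble $g_{v}$ as a composition $\Phi_{N}\circ\cdots\circ\Phi_{1}$ of short stacks of token-wise feed-forward layers (hidden width $d_{f}=1$, piecewise linear activation), where $\Phi_{j}$ performs a selective overwrite: its first linear layer reads off $\bm{u}^{\top}x$ on each column $x$, a piecewise linear bump supported on a small sub-interval $I_{j}$ around $t_{j}$ detects whether $\bm{u}^{\top}x=t_{j}$, and the layer then deposits $v_{j}$ into the detected column while acting as the identity on every other column --- exactly the token-wise table-lookup gadget of \citet{yun2020n}, including their device for ordering the steps so that columns already rewritten by an earlier $\Phi_{i}$ ($i<j$) are not re-triggered, and costing a bounded number of feed-forward layers per value $t_{j}$. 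Because the original $t_{j}$'s lie in pairwise disjoint intervals, $I_{j}$ is reached by column $k_{j}$ of $\bm{Z}_{j}$ and by no unmodified column of any other admissible input, so the $\Phi_{j}$'s do not interfere; interleaving the zeroed-out (identity) attention layers to respect the transformer block structure yields a modified SepLLM in $\Bar{\mathcal{T}}^{2,1,1}_{\text{Sep}}$ of the claimed $n/\delta^{dn}$-layer size, and by construction $g_{v}(g_{c}(\bm{Z}))_{k}=v_{j(\bm{Z},k)}=\Bar{f}(\bm{Z}-\bm{E})_{k}$ for all $\bm{Z}\in\mathcal{G}_{\delta}$ and all $k$.

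The one point deserving care --- and the mild obstacle here --- is precisely this cross-column non-interference: since a single feed-forward map is applied to all $n$ columns simultaneously, each selector $\Phi_{j}$ must be a genuine identity off its target column for every admissible input at once, which is exactly what the disjoint-interval guarantee of the contextual mapping provides (together with the careful step ordering above to avoid self-collision). With it the argument is identical to \citet{yun2020n}, and no property of separator-based sparsity is invoked. The remaining items --- checking that the widths $d_{h}=1$, $d_{f}=1$ and two heads suffice, and tracking the residual connections and the layer count --- are routine.
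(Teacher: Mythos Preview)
Your proposal is correct and matches the paper's approach: the paper does not give an independent proof of this lemma but simply invokes Lemma~8 of \citet{yun2020n}, and you have correctly identified why that result transfers to the SepLLM setting --- namely, that $g_{v}$ is built entirely from token-wise feed-forward layers with the interleaved attention blocks zeroed out to the identity, so the sparse attention pattern of SepLLM plays no role whatsoever in this step. Your sketch of the table-lookup construction and the cross-column non-interference argument via the disjoint-interval guarantee of the contextual mapping is accurate and in fact more detailed than what the paper provides.
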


From the above analysis, we obtain
\begin{equation*}
    \Bar{g}(\bm{X}) = g_{v} \circ g_{c} \circ g_{q} (\bm{X} + \bm{E}) = \Bar{f}(\bm{X}),
\end{equation*}
where the quantization mapping $g_{q}$, the contextual mapping $g_{c}$ and the value mapping $g_{v}$ are all realized by the modified SepLLM. Here $\Bar{g}$ represents the modified SepLLM with $\frac{nd}{\delta} + \frac{n-1}{\delta^{d}} + \lceil \frac{s}{\ell} \rceil + \frac{n}{\delta^{nd}}$ layers. The following lemma states that the modified SepLLM can be approximated by a standard SepLLM with arbitrary small error.

\begin{lemma}
[Lemma 4 in \citet{yun2020n}]
    For any modified SepLLM $\Bar{g} \in \Bar{\mathcal{T}}^{2,1,1}_{\text{Sep}}$ and any $\epsilon >0 $, there exists a SepLLM $g \in \mathcal{T}^{2,1,4}_{\text{Sep}}$, such that $d_{p}(g,\Bar{g}) < \epsilon$.
\end{lemma}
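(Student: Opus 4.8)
The plan is to replace, one layer at a time, the two ingredients in which a modified SepLLM $\Bar{g}\in\Bar{\mathcal{T}}^{2,1,1}_{\text{Sep}}$ differs from a standard SepLLM: the hardmax operator inside its attention layers and the (at most three-piece) piecewise-linear activation inside its feed-forward layers. Since $\Bar{g}$ is the composition of a fixed, finite number $L$ of such layers acting on the compact domain $[0,1]^{d\times n}$ (equivalently $[0,n]^{d\times n}$ after adding the positional encoding $\bm{E}$), it is enough to approximate each layer accurately and then control how these per-layer errors accumulate through the composition. I would first construct the single-layer replacements and then carry out the error-propagation bookkeeping.

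\textbf{Feed-forward layers.} Each activation of $\Bar{g}$ is a continuous piecewise-linear function with at most three pieces, i.e. at most two breakpoints. Any such scalar map can be written \emph{exactly} as a linear combination of at most four ReLU units — two to place the breakpoints and two more to recover the linear trend through the identity $\mathrm{ReLU}(t)-\mathrm{ReLU}(-t)=t$ — which is precisely the hidden width $d_f=4$ granted by $\mathcal{T}^{2,1,4}_{\text{Sep}}$, as opposed to $d_f=1$ in the modified class. Hence the feed-forward component is reproduced with no error.

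\textbf{Attention layers (the main obstacle).} Here I approximate hardmax by softmax. Scaling the query/key parameters by a large temperature $\lambda$ turns the pre-softmax logits into $\lambda\Lambda$, and as $\lambda\to\infty$ the map $\mathrm{softmax}(\lambda\Lambda)$ converges to the one-hot indicator of $\arg\max$ whenever the largest logit is strictly separated from the rest. The difficulty is that hardmax is discontinuous on the tie set where this gap vanishes, so softmax cannot match it uniformly there; this is exactly why the conclusion is phrased with the integral metric $d_p$ rather than a sup norm. I would fix a margin $\eta>0$ and split the domain into a good set, on which every attention row has a unique maximizer with gap at least $\eta$, and a bad set near ties. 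On the good set, taking $\lambda$ large forces the softmax output within any prescribed tolerance of hardmax; the bad set has Lebesgue measure tending to $0$ as $\eta\to0$, and since all attention outputs are uniformly bounded on the compact domain, its contribution to $d_p$ is controlled by that uniform bound times the $p$-th root of the bad-set measure. Choosing $\eta$ small and then $\lambda$ large makes the per-layer $\ell_p$ error arbitrarily small. Note that the SepLLM sparsity only restricts each attention row to the index set $\mathcal{A}_k$ of tokens that token $k$ may attend to, so this replacement is carried out row-wise and is entirely insensitive to the support restriction.

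\textbf{Composition and error control.} Finally I would propagate the per-layer errors through the $L$-fold composition. Every layer of the standard SepLLM — affine maps, softmax, ReLU, and the residual connections — is Lipschitz on the relevant bounded set, so the total $d_p$ distance between $\Bar{g}$ and the assembled standard SepLLM $g$ is bounded by a finite sum of per-layer errors, each amplified by the product of the Lipschitz constants of the downstream layers. Because $L$ and all these constants are finite and fixed in advance, it suffices to drive each per-layer error below $\epsilon$ divided by this fixed amplification factor, which yields $d_p(\Bar{g},g)<\epsilon$. The only genuinely delicate step is the tie-set analysis for the hardmax-to-softmax replacement; the rest is a bounded, finite-depth accumulation argument, essentially following the corresponding construction in \citet{yun2020n}.
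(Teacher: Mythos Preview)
Your proposal is correct and follows exactly the argument of \citet{yun2020n} that the paper invokes: the paper does not supply its own proof of this lemma but cites it directly, and your sketch faithfully reconstructs that cited argument (exact ReLU representation of three-piece piecewise-linear activations with width four, temperature-scaled softmax approximating hardmax away from a small-measure tie set, and Lipschitz error propagation through finitely many layers). There is nothing to add.
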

 By combining all the lemmas above, we conclude that for any $f \in \mathcal{F}$ and $\epsilon >0$, there exists a SepLLM $g \in \mathcal{T}^{2,1,4}_{\text{Sep}}$, such that $d_{p}(f, g) < \epsilon$.

\section{AI Assistant}
We only use AI assistants (\eg, GPT-3.5-Turbo) to polish and refine the article.

\balance

\end{document}